\documentclass[preprint]{article}
\usepackage[nonatbib]{format/neurips_2026}

\usepackage[round]{natbib}

\usepackage[utf8]{inputenc} 
\usepackage[T1]{fontenc}    
\usepackage{hyperref}       
\usepackage{url}            
\usepackage{booktabs}       
\usepackage{amsfonts}       
\usepackage{microtype}      
\usepackage{xcolor}         
\usepackage{colortbl}

\usepackage{amsmath}
\usepackage{amssymb}
\usepackage{mathtools}
\usepackage{amsthm}
\usepackage{graphicx}
\usepackage{bm}
\usepackage{enumitem}
\usepackage{caption}
\usepackage{multirow}
\usepackage{wrapfig}
\usepackage{physics}
\usepackage{siunitx}
\usepackage{subcaption}

\usepackage[capitalize,noabbrev]{cleveref}

\newcommand{\ind}{\mathbf{1}}

\newcommand{\E}{\operatornamewithlimits{\mathbb{E}}}

\theoremstyle{plain}
\newtheorem{theorem}{Theorem}[section]
\newtheorem{proposition}[theorem]{Proposition}

\newtheorem{corollary}[theorem]{Corollary}
\theoremstyle{definition}
\newtheorem{definition}[theorem]{Definition}

\theoremstyle{remark}
\newtheorem{remark}[theorem]{Remark}

\usepackage{tcolorbox}

\tcolorboxenvironment{lemma}{
  colback=black!5,
  colframe=black!10,
}

\usepackage[disable,textsize=tiny]{todonotes}
\setuptodonotes{inline}
\begin{document}
\title{Pandora's Regret: A Proper Scoring Rule for Evaluating Sequential Search}

\author{
  Gerardo A. Flores \\
  MIT \\
  \texttt{nullset@mit.edu} \\
  \And
  Yash Deshpande \\
  Independent \\
  \texttt{yashrajas@gmail.com}
  \AND
  Jannis R. Brea \\
  NYU \\
  \texttt{jannis.brea@nyulangone.org}
  \And
  Ashia C. Wilson \\
  MIT \\
  \texttt{ashia07@mit.edu}
}
\maketitle

\begin{abstract}
In sequential search, alternatives are tested until the true class is found.
Standard proper scoring rules like log loss are local, ignoring the ranking of competitors and misaligning model evaluation with search utility.
We show that sequential search induces a pairwise structure that overcomes this.
By analyzing the expected cost of optimal search under varying testing costs, we derive {\em Pandora’s Regret}: a closed-form, pairwise-additive, and strictly proper scoring rule.
Pandora’s Regret both elicits true probabilities and penalizes rank-reversing miscalibrations where distractors outrank the true class.
Our construction yields a one-parameter Beta family that balances penalties for rank-swapping versus probability magnitude, while retaining a grounded interpretation as expected search cost.
We prove that log loss, accuracy, and macro-F1 rely on implicit decision models misaligned with sequential search.
Across 597 MedMNIST models, Pandora-based metrics better predict clinical diagnostic costs than standard alternatives, extending decision-theoretic scoring rule construction to the multiclass setting.
\end{abstract}

\section{Introduction}
When a patient arrives at the emergency room with abdominal pain, a physician does not test for every condition simultaneously.
Instead, diagnosis proceeds as a \emph{sequential search}: conditions are evaluated in an order determined by their likelihood and the cost of testing.  Similar search problems arise in other domains. An engineer troubleshooting a failing system checks possible faults one by one rather than all at once. An analyst investigating a network outage examines candidate failure points sequentially until the source is found. In each case, alternatives are prioritized by their plausibility and the cost of ruling them out.

The probabilistic forecast derived from a classifier allows the sequencing of tests to be optimized for a particular patient, not just general population guidelines.
Standard machine learning metrics do not model this process.
They allow decision-theoretic interpretations, but correspond to expected costs from decision models where actions are taken simultaneously and independently.
As a result, they do not reward good \emph{within-example ranking} of classes and implicitly assume that decision thresholds can be chosen independently across labels.
This ignores the difference between two forecasts that assign the same probability to the true class but distribute the remaining mass differently across competing classes: one may induce an efficient search order, while the other wastes far more resources testing the wrong alternatives first.

This leaves practitioners with a false choice: use simple but misaligned metrics like Log Loss, or develop bespoke cost models whose complexity precludes their use in routine model selection. We bridge this gap with the following contributions:

\textbf{(1) We pose multiclass evaluation for sequential, cost-sensitive settings as an optimal search problem, within the Pandora framework \citep{weitzman79}.}
Under the optimal policy, classes are evaluated in descending order of probability-to-cost ratio.

\textbf{(2) We derive \emph{Pandora's Regret}, a closed-form, pairwise-additive, strictly proper scoring rule induced by this decision problem.} 
\emph{Pandora's Regret} is the expected excess cost incurred by testing incorrect classes before the true one.
We further characterize a one-parameter family of such scores and use it to connect smoother and more ranking-sensitive regimes.

\textbf{(3) We identify core desiderata for evaluating forecasts in sequential multiclass search: decision alignment, strict propriety, and within-example order sensitivity.}
We show that Pandora's Regret satisfies them, and that log loss, accuracy, and Macro F1 fail for different structural reasons: they correspond to different implicit decision models and therefore miss key features of sequential search.

\textbf{(4) We provide focused empirical evidence on MedMNIST \cite{medmnistv1,medmnistv2} that ranking by Pandora's Regret selects better models than standard alternatives, as measured by downstream diagnostic costs.}
Despite using a uniform, i.i.d.\ cost model, Pandora's Regret still correlates more closely with the task-specific model of diagnostic costs than standard alternatives.

\subsection{Related Work}
\label{sec:related}

\paragraph{Sequential search and clinical decision theory.}
Sequencing tests by probability / cost ratio is a classic idea in economic decision theory and operations research.
Our starting point is the Pandora problem formulation of \cite{weitzman79}, although \cite{matula64} reports Blackwell already knew about the optimal search policy.
Threshold-based analysis for a single diagnosis entered the clinical decision theory literature with \cite{pauker80}, and although \cite{eiseman89} gave a bespoke analysis for two conditions, the core of the sequential decision literature has focused on sequencing different tests for a single diagnosis.
\citet{costeffective08} critiques standard metric evaluation as not accounting for the sequential nature of tests, but remains within the framework of tests for a single condition.
To our knowledge, the $p_k/c_k$ ratio ordering has not been considered in the clinical decision theory literature.
Our contribution is not to advance search theory, but to use the simplifying assumptions of the Pandora problem to give a decision theory-based scoring rule for sequential search with a simple closed-form pairwise structure.

{\bf Proper scoring rules and pairwise proper losses.}
Proper scoring rules incentivize honest probability forecasts and are classically characterized by convex entropy generators \cite{savage71,proper07,dawid07}.
In the binary setting, every proper scoring rule admits an integral representation as a mixture of cost-weighted threshold losses \cite{schervish89}, linking propriety directly to decision structure.
There is a long tradition of using this decomposition as a mixture of simpler decisions to interpret proper scoring rules \cite{murphy66,hand09,recentbeta24}, and \citet{costeffective08} uses this decomposition to argue that evaluation metrics should be evaluated at clinically relevant operating conditions, but remains in the binary setting.

In multiclass settings, \citet{williamson11} show that there cannot be a decomposition onto a simple set of elementary scoring rules.
However, \cite{dawid14} show that sums of scoring rules over marginals of subsets of the variables are themselves proper.
Furthermore, \cite{menon12,menon16} consider learning cross-instance ranking and recalibrating as an approach to training a classifier, but remain within the binary label setting.
Our contribution is therefore not to prove that pairwise proper scoring rules exist, but to give a direct decision-theoretic construction for the Pandora's Regret score and broader Beta family, tying this decomposition to a specific semantics, and empirically evaluate it on a realistic task.

{\bf Label ranking surrogates.}
\cite{wang22} explicitly targets label ranking for multiclass (one-hot) problems and softens pairwise comparisons using sigmoid functions, with or without added Gumbel noise.
More broadly, \cite{wang25} approaches within-instance multilabel ranking (multiple true labels) by asking the probability that a randomly drawn top-K label is higher than that of a randomly drawn positive label.
They then use convex, bounded, decreasing surrogate losses in the difference and prove consistent estimation of the ranking.
\cite{yan22} observe that all such ranking surrogates are translation invariant because scores appear only as differences, so they cannot recover the scale of the underlying labels.
In contrast, the core contribution of our work is the concrete, realistic, discrete sequential search cost grounding and the propriety to elicit the magnitudes of probabilities, not merely their ordering.

\section{The Sequential Search Problem}
\label{sec:sequential}
Many multiclass decision problems are sequential: candidate classes are evaluated one at a time, in an order shaped by both plausibility and the cost of ruling them out.
We formalize this as an optimal search problem and then ask what it means to evaluate probabilistic forecasts for use in such settings.

\begin{figure}[h]
  \centering
  \includegraphics[width=0.65\textwidth]{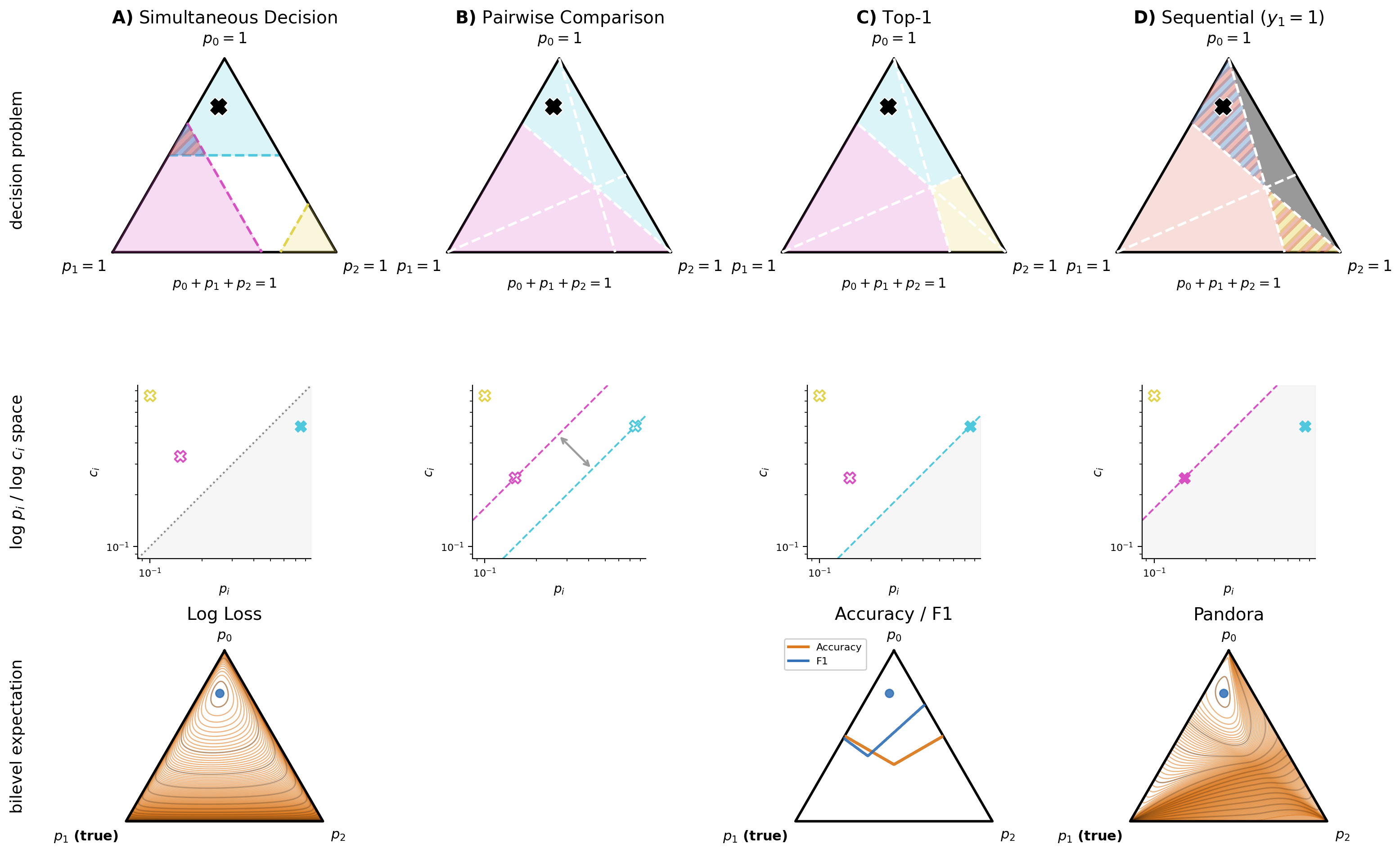}
  \caption{
    A) Simultaneous decision judges each point by whether the $p_k/c_k > 1$.
    In the simplex, some regions test more than one class.
    We plot $p_i,c_i$ separately for each class in the lower diagram.
    Log Loss represents an expectation over possible costs.
    B) A pairwise comparison between two classes splits the simplex in two, and compares the offsets in the $p_i,c_i$ plane.
    Fixing the probability of the true class, the pairwise comparison still changes depending on the ratio of the distractors.
    C) Top-1 is a conjunction of pairwise comparisons; only the lower-right class in the $p_i,c_i$ plane is tested.
    Accuracy and F1 fall in this class with different decision boundaries.
    D) Sequential search starts in the lower right of the $p_i,c_i$ plane, and keeps testing until it reaches the true class.
    In the simplex, some regions test two (striped) or even three (grey) classes.
    The expectation over search costs has more structure than the simultaneous decision.
  }
  \label{fig:simplex_decision_regions}
\end{figure}

\subsection{Optimal Sequential Search}
We begin with the downstream decision problem faced by an individual decision-maker. Let $\vec{p} \in \Delta^{K-1}$ denote the predictive distribution over $K$ classes, and let $c_k > 0$ denote the cost of testing class $k$. We assume that tests are definitive: once the true class is tested, the search ends. This is an example of the \emph{Pandora problem} \cite{weitzman79}, a sequential search problem well-known in economics.

Pandora problems differ from a standard parallel decision view, in which each class is treated as an independent binary action and all classes satisfying a threshold such as $p_k / c_k > 1$ would be tested simultaneously. That formulation ignores the fact that, in many applications, testing is inherently sequential and later tests become unnecessary once the correct class has been found.

Under sequential search, the relevant question is not whether class $k$ should be tested in isolation, but whether it should be tested \emph{before} another class $j$. Consider two classes $k$ and $j$. Testing $k$ before $j$ yields expected total cost $c_k + (1-p_k)c_j$, while testing $j$ before $k$ yields $c_j + (1-p_j)c_k$. The former is preferable exactly when
$
c_k + (1-p_k)c_j \leq c_j + (1-p_j)c_k,
$
which rearranges to
$
\tfrac{p_k}{c_k} \geq \tfrac{p_j}{c_j}.
$

For an instance with true class $i$, the realized search cost is the sum of the costs of all distractors tested before $i$, plus the cost of testing $i$ itself. Equivalently, the cost depends on the position of the true class in the ordering induced by $\{p_k/c_k\}_{k=1}^K$. This already highlights the key evaluative point: two forecasts can assign the same probability to the true class while inducing very different search costs if they rank competing classes differently.

Although we focus here on the simplest setting, the same logic extends to richer decision models with imperfect tests and heterogeneous treatment costs; see \cref{apdx:imperfect_tests}.

\subsection{Bilevel Evaluation}
\label{sec:bilevel}

The sequential search rule describes how an individual clinician should act given predicted probabilities and local testing costs.
Evaluation, however, is a different problem.
The model developer or evaluator does not act on a single patient.
Instead, they choose a model or scoring rule that will be used across many downstream decisions, often under varying operational conditions.
A metric intended to guide model selection should therefore reflect uncertainty not only over patients, but also over the operational cost structure under which predictions will be used.

This leads to a bilevel problem.
The outer level selects a scoring rule; the inner level assumes that clinicians act optimally given predicted probabilities and their local cost structure.
Formally, we evaluate
$$\mathbb{E}_{\vec{c}} \,\mathbb{E}_{(x,y) \in \mathcal{D}_\text{test}} \min_a C(x,y,a;\vec{c})$$
where the expectation over $\vec{c}$ reflects uncertainty in testing costs.
In binary settings, any strictly proper scoring rule is necessarily aligned to some distribution over decision contexts \cite{schervish89}, but in multiclass settings this is no longer generally true  \cite{williamson16}.

\subsection{Desiderata}
\label{sec:desiderata}

The bilevel formulation suggests several properties that a metric for sequential search should satisfy.

\begin{enumerate}[leftmargin=*]

\item \textbf{Decision alignment.}
The metric should reflect the actual downstream decision structure. Here that means alignment with the expected cost of sequential, cost-sensitive search rather than with parallel thresholding, top-1 prediction, or dataset-level aggregate tradeoffs unrelated to the instance at hand.

\item \textbf{Strict propriety.}
Because the downstream user acts on predicted probabilities, the metric should incentivize truthful probabilistic forecasts.
A scoring rule should therefore be minimized in expectation by the true conditional distribution.

\item \textbf{Within-instance order sensitivity.}
In sequential search, the position of the true class in the evaluation order determines costs.
A scoring rule should therefore penalize errors that move the true class down in the evaluation order more highly than similar perturbations of magnitudes that leave the within-instance ranking among competing classes intact.

\end{enumerate}

\section[Pandora's Regret]{Pandora's Regret\footnote{With apologies to connoisseurs of Greek mythology \textit{and} economic search theory.}}

We now derive the scoring rules induced by sequential search. Under broad continuous cost priors, expected search cost is pairwise decomposable and, under weak conditions, strictly proper. Specializing to i.i.d. uniform costs yields a simple closed-form score, \emph{Pandora's Regret}, which we then extend to a $\mathrm{Beta}(\alpha,1)$ family and to heterogeneous class costs. The resulting scores place extra weight on the within-instance miscalibrations that matter for sequential search, especially those that reverse the ordering of classes.
\subsection{Expected Search Cost is a Pairwise Decomposable, Proper Scoring Rule}

We show that expected search cost decomposes into pairwise comparisons between the true class and each distractor.

\begin{theorem}[Search cost decomposition]
\label{thm:decomposition_body}
Let $\vec{p} \in \mathrm{int}(\Delta^{K-1})$ and let $i$ be the true class. Let testing costs $\vec{c}=(c_1,\dots,c_K)$ be drawn from a distribution $F$ on $\mathbb{R}_{>0}^K$ such that $\mathbb{E}[c_k] < \infty$ for all $k$, and assume $\vec{c} \perp y$. Under the optimal sequential search policy, which tests classes in decreasing order of $p_k/c_k$, the expected total search cost decomposes as
\begin{equation}
S_F(\vec{p},y{=}i)=\mathbb{E}_F[c_i]+\sum_{j\neq i} L_F^{i\to j}\!\left(p_i/p_j\right),
\end{equation}
where $ L_F^{i\to j}(r):= \mathbb{E}_F\!\left[c_j\,\ind\!\left(c_j < \frac{c_i}{r}\right)\right]$
depends on $F$ only through the bivariate marginal of $(c_i,c_j)$. Moreover:
\begin{enumerate}
\item[\textup{(i)}] $S_F$ is proper.
\item[\textup{(ii)}] $S_F$ is strictly proper if, for every $i\neq j$, the ratio $c_i/c_j$ has support $(0,\infty)$.
\end{enumerate}
\end{theorem}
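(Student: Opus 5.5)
The proof has three steps: decompose the realized search cost pairwise, show propriety by reducing to independent binary problems, then get strictness from the support condition.

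\textbf{Decomposition.} Fix $\vec c$ and the true class $i$. The optimal policy tests classes in decreasing order of $p_k/c_k$, so the realized cost is
\[
c_i+\sum_{j\neq i} c_j\,\ind\!\left(p_j/c_j>p_i/c_i\right).
\]
Ties occur on a set that is null when $F$ is continuous, or they can be broken by any fixed rule; I will treat them as measure-zero. The inequality $p_j/c_j>p_i/c_i$ is equivalent to $c_j<c_i/r$ with $r=p_i/p_j$, which is well defined because $\vec p$ lies in the interior of the simplex. Each term is bounded by $c_j$, which is integrable. Taking $\mathbb{E}_F$ and using linearity gives the stated formula. Each summand involves only $(c_i,c_j)$, so $L_F^{i\to j}$ depends only on that bivariate marginal. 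The assumption $\vec c\perp y$ is what lets the same $F$ be used regardless of which class is true.

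\textbf{Propriety (i).} Let $\vec q$ be the true distribution and $\vec p$ the report. The expected score $\sum_i q_i S_F(\vec p,i)$ has a report-independent term $\sum_i q_i\mathbb{E}[c_i]$. The remaining terms regroup over unordered pairs $\{j,k\}$ as
\[
\mathbb{E}_F\!\left[q_k c_j\,\ind\!\left(p_j c_k>p_k c_j\right)+q_j c_k\,\ind\!\left(p_k c_j>p_j c_k\right)\right].
\]
For fixed $\vec c$, the bracket is the cost of deciding which of $j,k$ to test first, and it is minimized pointwise by ordering according to $q_j/c_j$ versus $q_k/c_k$. This is the two-class exchange argument from the optimal-search discussion. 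Reporting $\vec p=\vec q$ induces exactly this ordering for every $\vec c$ and every pair at once. So truth minimizes each pair's term pointwise in $\vec c$, hence in expectation, which gives propriety. The key observation is that the truthful report is simultaneously optimal for all pairwise subproblems, so the $\binom K2$ pairwise problems can be minimized independently.

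\textbf{Strict propriety (ii).} This is the main obstacle. Suppose $\vec p\neq\vec q$, both in the interior. Since both vectors sum to one, some pair $j,k$ has $p_j/p_k\neq q_j/q_k$; say $q_j/q_k<p_j/p_k$. The reported and truthful orderings then disagree on the event
\[
\{\,q_j/q_k< c_j/c_k< p_j/p_k\,\}.
\]
On this event the misreport incurs a strictly positive excess cost, namely $|q_k c_j-q_j c_k|>0$ except on the boundary. Because $c_j/c_k$ has support $(0,\infty)$, every open interval has positive probability, so this event has positive probability. Since every other pair's excess is pointwise nonnegative, the total expected excess is strictly positive. The delicate points are checking that the excess is strictly positive, not merely nonnegative, on the open event, and making sure that boundary and tie-breaking conventions do not remove mass. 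Both are handled by restricting to the open interval of ratios, where all inequalities are strict.
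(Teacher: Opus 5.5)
Your decomposition, the regrouping of the Bayes risk over unordered pairs, the pointwise two-order comparison showing that the truthful threshold is optimal for every pair simultaneously, and the open-interval argument for strictness all match the paper's proof. They are complete whenever every ratio $c_j/c_k$ is atomless.

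The gap is ties, which you dismiss as null. Nothing in the hypotheses makes $F$ continuous: part (i) is meant to cover deterministic or finitely supported empirical cost priors, and full support in (ii) does not rule out atoms. This is not cosmetic. With your strict indicators, both indicators vanish on the report's tie set $\{c_j/c_k=p_j/p_k\}$, so the pairwise bracket is $0$ there. That is strictly below the cost of either order, so your claim that the bracket is minimized pointwise by the truthful order fails on that set.

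For instance, take $K=2$ and deterministic costs. The report $p_1/c_1=p_2/c_2$ gives $S(\vec p,y)=c_y$ for both labels, which is the oracle cost. It strictly beats the truthful report whenever $q_1/q_2\neq c_1/c_2$. Giving $c_j/c_k$ a large atom on top of a full-support continuous part and reporting the atom breaks (ii) the same way, because it invalidates your step that all other terms have nonnegative excess.

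Your fallback of ``any fixed rule'' does define a legitimate policy. But a $y$-independent tie-break (e.g.\ by class index) makes the realized cost include $c_j$ on ties where $j$ goes first, so the strict-indicator formula is then no longer the search cost.

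The paper fixes an explicit convention: ties are broken uniformly at random, independently of $(y,\vec c)$, and the indicator is replaced by $h(a,b)=\ind(a>b)+\tfrac12\ind(a=b)$. On a tie the pair then pays $\tfrac12(q_k c_j+q_j c_k)\ge\min(q_k c_j,\,q_j c_k)$. With that convention, your pointwise minimization and nonnegative-excess steps go through verbatim for arbitrary $F$. The strict indicator in the statement is accurate only when ties are null; with $h$ in its place, your argument proves the general case.
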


Full proof details are in \cref{thm:decomposition} and \cref{thm:properness}. The decomposition follows directly from the ratio rule. If the true class is $i$, then the search process always incurs the cost $c_i$ of testing the correct class. In addition, it incurs the cost $c_j$ of a distractor $j$ exactly when $j$ is tested before $i$. Under the optimal policy, this happens if and only if
\[
{p_j/c_j} > {p_i/c_i}
\quad\Longleftrightarrow\quad
c_j < c_i/(p_i/p_j),
\]
Taking expectations therefore yields a sum of pairwise terms, one for each distractor.
This decomposition is important for two reasons. First, it shows that the score depends on $\vec p$ through the relative ordering of the true class and each competitor, rather than only through the probability assigned to the true class. Second, it shows that sequential search naturally gives rise to a \emph{pairwise-additive} scoring rule, with each pairwise term indexed by the odds ratio $p_i/p_j$.

Propriety follows because, for any realized cost vector, the true conditional distribution induces the cost-minimizing ordering in expectation. Averaging over cost realizations preserves this property. Strict propriety follows under full support of the pairwise cost ratios: if the forecast changes any ratio $p_i/p_j$, then there is a positive-measure set of cost realizations for which the induced ordering of $i$ and $j$ is suboptimal. Thus any deviation from the truth incurs strictly greater expected search cost.

Theorem~\ref{thm:decomposition_body} already gives a broad class of proper scoring rules. We now specialize to a particularly simple and interpretable choice of cost prior.
To obtain a concrete closed-form score, we consider the canonical bounded prior in which testing costs are drawn independently from the uniform distribution on $[0,1]$. This prior yields a simple pairwise loss that, after removing constants and normalizing, produces our main scoring rule.

\begin{definition}[Pandora's Regret]
\label{def:pandora_regret}
Let $c_i,c_j \stackrel{\mathrm{iid}}{\sim} \mathrm{Unif}[0,1]$. Define the pairwise loss
\begin{equation}
L_{\mathrm{pandora}}(r) :=
6\,\mathbb{E}[c_j\,\ind(c_j < c_i/r)]
=
\begin{cases}
3 - 2r, & r \leq 1, \\[4pt]
r^{-2}, & r > 1.
\end{cases}
\end{equation}
Then \emph{Pandora's Regret} is
\begin{equation}
S_{\mathrm{pandora}}(\vec p,i) := \frac{1}{3(K-1)}
\sum_{j\neq i}
L_{\mathrm{pandora}}\!\left(\tfrac{p_i}{p_j}\right).
\end{equation}
\end{definition}
\vspace{-1em} 
The pairwise term has an intuitive shape. When $r=p_i/p_j \leq 1$, the distractor $j$ is at least as probable as the true class, so the loss decreases linearly as the true class catches up. When $r>1$, the distractor already lies below the true class, and the penalty decays quadratically. Thus Pandora's Regret is especially sensitive to the rank reversals and near-ties that matter for sequential search.

\begin{theorem}[Uniform-cost form and strict propriety]
\label{thm:pandora_regret_proper}
Let $c_1,\dots,c_K \stackrel{\mathrm{iid}}{\sim} \mathrm{Unif}[0,1]$, independent of the label $y$. Then the expected optimal search cost satisfies
\begin{equation}
S_{\mathrm{unif}}(\vec p,y{=}i)
=
\frac{1 + (K-1)S_{\mathrm{pandora}}(\vec p,i)}{2}.
\end{equation}
Consequently, $S_{\mathrm{pandora}}$ is a strictly proper scoring rule.
\end{theorem}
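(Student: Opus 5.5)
We plan to apply the search cost decomposition (Theorem~\ref{thm:decomposition_body}) with $F$ the product of uniform distributions on $[0,1]^K$, compute the pairwise term in closed form, and then read off strict propriety from part (ii) of that theorem.

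The hypotheses of Theorem~\ref{thm:decomposition_body} are immediate. Each $\mathbb{E}[c_k]=1/2<\infty$, and $\vec c\perp y$ is assumed. The theorem therefore gives
\[
S_{\mathrm{unif}}(\vec p,y{=}i)=\tfrac12+\sum_{j\neq i}L_F^{i\to j}(p_i/p_j),
\qquad
L_F^{i\to j}(r)=\mathbb{E}\bigl[c_j\,\ind(c_j<c_i/r)\bigr].
\]
By independence we condition on $c_i=u$. The inner expectation is then $\int_0^{\min(1,u/r)} t\,dt=\tfrac12\min(1,u/r)^2$, so $L_F^{i\to j}(r)=\tfrac12\int_0^1\min(1,u/r)^2\,du$.

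We split into two cases on $r$.
\begin{itemize}
\item If $r\ge 1$, then $u/r\le 1$ on all of $[0,1]$. The integral is $\tfrac12\cdot\tfrac{1}{3r^2}=\tfrac{1}{6r^2}$.
\item If $r<1$, the minimum equals $u/r$ on $[0,r]$ and equals $1$ on $[r,1]$. The integral is $\tfrac12\bigl(\tfrac{r}{3}+1-r\bigr)=\tfrac{3-2r}{6}$.
\end{itemize}
Hence $L_F^{i\to j}=\tfrac16 L_{\mathrm{pandora}}$, which confirms the closed form in Definition~\ref{def:pandora_regret}. The two branches agree at $r=1$.

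Substituting back and using $\sum_{j\ne i}L_{\mathrm{pandora}}=3(K-1)S_{\mathrm{pandora}}$,
\[
S_{\mathrm{unif}}=\tfrac12+\tfrac16\cdot 3(K-1)S_{\mathrm{pandora}}=\frac{1+(K-1)S_{\mathrm{pandora}}}{2}.
\]
This is the claimed identity.

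For strict propriety, we need the ratio $c_i/c_j$ of two independent $\mathrm{Unif}[0,1]$ variables to have support $(0,\infty)$. This holds because its density is positive on all of $(0,\infty)$: it equals $1/2$ on $(0,1]$ and $1/(2x^2)$ beyond. Part (ii) of Theorem~\ref{thm:decomposition_body} then gives that $S_{\mathrm{unif}}$ is strictly proper. Since $S_{\mathrm{pandora}}=(2S_{\mathrm{unif}}-1)/(K-1)$ is a positive affine transformation with constants independent of $\vec p$ and $i$, expected scores are ordered identically, so $S_{\mathrm{pandora}}$ is strictly proper.

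The main point requiring care is the domain. Theorem~\ref{thm:decomposition_body} is stated for $\vec p$ in the interior of the simplex, so that $p_i/p_j$ is finite and positive. We will state propriety over interior forecasts, or extend by the natural limits $L(0)=3$ and $L(\infty)=0$ if boundary forecasts are needed. The remaining computation is routine.
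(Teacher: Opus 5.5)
Your proposal is correct and follows the paper's argument. The paper obtains this theorem as the $\alpha=1$, $C_k\equiv 1$ case of its $\mathrm{Beta}(\alpha,1)$ computation (\cref{prop:beta_pairwise} through \cref{cor:pandora_regret}), using the same steps you use: conditioning on $c_i$ for the truncated first moment, splitting at $u=r$, full support of $c_i/c_j$ for strict propriety, and a positive affine rescaling; you simply specialize to uniform costs from the start.
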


Theorem~\ref{thm:pandora_regret_proper} shows that Pandora's Regret is not merely motivated by sequential search; it is an affine transformation of expected optimal search cost under the uniform prior. The normalization makes the score easier to compare across examples while preserving its decision-theoretic interpretation. In particular, $(K-1)S_{\mathrm{pandora}}(\vec p,i)=x$ implies an $\tfrac{x}{100}\%$ expected excess search burden relative to oracle performance.

This construction satisfies the desiderata of \cref{sec:desiderata}. It is {\em strictly proper}, because it is uniquely minimized by the true conditional distribution. It is {\em order-sensitive}, because it decomposes into pairwise comparisons with each distractor and changes sharply when the true class falls below a competitor. It is {\em decision-aligned}, because it is derived directly from expected search cost under the sequential-search problem. And it is differentiable and bounded, making it suitable not only for model comparison but also, in principle, for recalibration and optimization.

\subsection{A Beta family of search-based scoring rules}
\label{sec:beta_family}

\begin{figure}[h]
  \centering
  \includegraphics[width=0.65\textwidth]{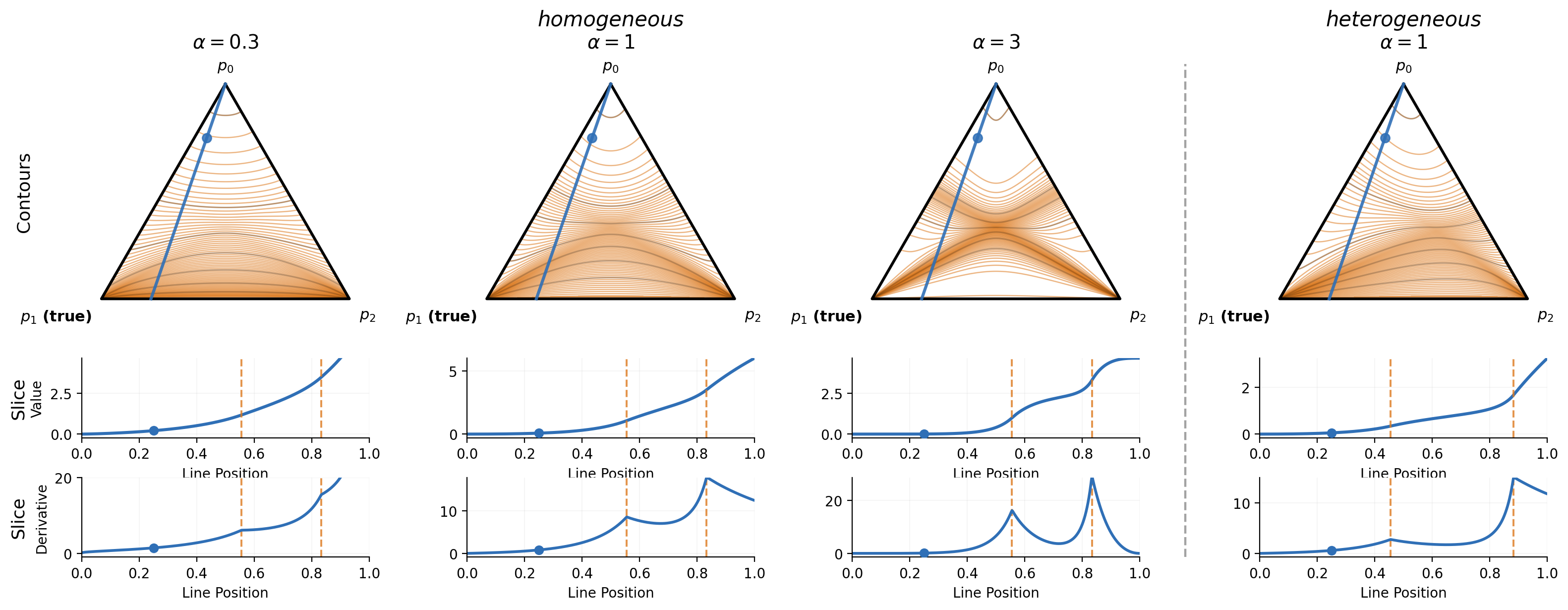}
  \caption{
Pandora's Regret is the $\alpha=1$ unit-cost member of a one-parameter $\mathrm{Beta}(\alpha,1)$ family of pairwise-additive scores, strictly proper for every finite $\alpha>0$.
As $\alpha\to\infty$, the score becomes increasingly rank-focused, and its limit depends only on the weighted ordering of the ratios $p_k/C_k$.
As $\alpha\downarrow0$, the pairwise loss tends to $1-\ln r$ for $r\le1$ and $r^{-1}$ for $r>1$.
When base costs $C_1,\dots,C_K>0$ are available, the weighted Beta score preserves strict propriety for every finite $\alpha>0$.
  }
  \label{fig:pairwise_alpha_simplex}
\end{figure}

Pandora's Regret is the $\alpha=1$ unit-cost specialization of a broader $S^{\mathrm{Beta}}_{\alpha,\mathbf C}$ family obtained by drawing latent unit costs from $\mathrm{Beta}(\alpha,1)$.
For $\alpha>0$, define
\[
  L_\alpha(r)
  \;=\;
  \begin{cases}
    1 + \tfrac{\alpha+1}{\alpha}\left(1-r^{\alpha}\right)
     & r \le 1, \\[4pt]
    r^{-(\alpha+1)}
      & r > 1
  \end{cases}
\]
In the appendix, we work with the normalized score $S^{\mathrm{Beta}}_{\alpha,\mathbf C}$ and its exact affine relation to raw expected search cost.
Here, for readability, we use the rescaled unit-cost score
\(
  S_\alpha(\vec p,i)
  :=
  \frac{1}{3(K-1)}\sum_{j\neq i} L_\alpha(p_i/p_j),
\)
so that $S_1 = S_{\mathrm{Pandora}}$.
See \cref{fig:pairwise_alpha_simplex} for a visualization of the score for several values of $\alpha$.
See \cref{apdx:beta_family,cor:beta_one,cor:beta_zero,cor:beta_infty} for the derivations and endpoint limits.

\paragraph{Heterogeneous class costs.}
If deterministic base costs $C_1,\dots,C_K>0$ are available, define the cost-adjusted probabilities $q_k:=p_k/C_k$.
The appendix's weighted Beta score is
\[
  S^{\mathrm{Beta}}_{\alpha,\mathbf C}(\vec p,i)
  \;:=\;
  \sum_{j\neq i}
  C_j\,L_\alpha\!\left(\frac{q_i}{q_j}\right).
\]
Dividing by $K-1$ simply averages over distractors and does not affect propriety.
This corresponds to realized testing costs of the form $c_k=C_k u_k$ with $u_k\stackrel{\mathrm{iid}}{\sim}\mathrm{Beta}(\alpha,1)$, so the optimal policy still orders classes by probability-to-cost ratio.
Strict propriety is preserved for every finite $\alpha>0$ and every $C_k>0$
(\cref{cor:beta_weighted}).

Taken together, these results show that Pandora's Regret is one member of a broader family of search-based scoring rules.
For every finite $\alpha>0$, the Beta score is strictly proper, and the limits $\alpha\downarrow0$ and $\alpha\to\infty$ are interesting.
We next compare this decision-aligned family to the standard multiclass metrics most commonly used in practice: log loss, accuracy, and macro-F1.

\section{Standard Metrics as Decision Models}

To clarify what Pandora's Regret changes, we analyze standard multiclass metrics through the decision problems they implicitly define. These fall into three categories: strictly proper, top-1, and full ranking metrics. Each captures a different notion of predictive quality, but none aligns with a sequential, cost-sensitive search. We recover the underlying decision structure from each metric's expected cost and analyze a representative example from each class.

\subsection{Strictly Proper: Log Loss}

Log loss is strictly proper, widely used, and admits classical decision-theoretic interpretations in the binary setting \cite{shuford66,savage71,schervish89}.
We show that log loss admits two interpretations as a bilevel optimization of a multiclass decision problem, one parallel and one sequential, but in both cases the resulting score depends only on the probability assigned to the true class.
Thus its insensitivity to within-instance ranking is structural.

\subsubsection{Log Loss as Parallel Decisions}
One interpretation of multiclass log loss is as a collection of $K$ independent binary decisions. For each class $j$, the decision-maker may treat at cost $c_j$ or incur cost $1$ if $j$ is in fact the true class.

\begin{theorem}[Cross-Entropy as Parallel Decisions]
\label{thm:logloss_structure}
Let $\vec p \in \Delta^{K-1}$ and $i$ be the true class.
The optimal policy is to treat if and only if $p_j/c_j > 1$.
If costs are drawn independently with density $w(c) \propto 1/c$ on $(0,1)$,
the expected optimal cost is
\(
S(\vec p,i) = -\ln p_i + 1.
\)
\end{theorem}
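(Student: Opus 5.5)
The plan is a direct computation, split class by class. The decision problems for different classes are independent and the costs are drawn independently, so the total expected cost is a sum of per-class expected costs. I will compute each term separately, with one case for the true class $i$ and one for the distractors $j\neq i$.

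\textbf{Step 1 (optimal policy).} Fix a class $j$ and a realized cost $c_j$. Treating costs $c_j$ with certainty. Not treating costs $1$ exactly when $y=j$, so its expected cost under the forecast is $p_j$. Treating is therefore optimal iff $c_j<p_j$, i.e.\ $p_j/c_j>1$; ties have measure zero and are irrelevant. This gives the first claim of \cref{thm:logloss_structure}.

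\textbf{Step 2 (realized costs).} Suppose the true class is $i$ and the forecaster follows the policy of Step 1 with $\vec p$.
\begin{itemize}
\item A distractor $j\neq i$ contributes $c_j\,\ind(c_j<p_j)$: we pay $c_j$ only if we treat, and nothing otherwise.
\item The true class contributes $c_i\,\ind(c_i<p_i)+\ind(c_i\ge p_i)$.
\end{itemize}

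\textbf{Step 3 (integrate against $w$).} Take $w(c)=1/c$ on $(0,1)$.
\begin{itemize}
\item For a distractor, $\int_0^{p_j} c\cdot c^{-1}\,dc=p_j$.
\item For the true class, $\int_0^{p_i} 1\,dc+\int_{p_i}^1 c^{-1}\,dc=p_i-\ln p_i$.
\end{itemize}
Summing and using $\sum_k p_k=1$ gives $S(\vec p,i)=\sum_{j\neq i}p_j+p_i-\ln p_i=1-\ln p_i$, as claimed.

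\textbf{Main obstacle.} The weight $1/c$ is not integrable on $(0,1)$, so ``$\propto$'' cannot mean normalization to a probability density. I will state explicitly that $w$ is used as an improper (unnormalized) weight with constant $1$; any other constant just rescales $S$. I will also check that each integral above is finite despite this. The distractor integrands $c\cdot c^{-1}$ are bounded. The divergent part of $w$ near $0$ enters only through $\int_{p_i}^1 c^{-1}\,dc$, and that integral is finite because $p_i>0$. If $p_i=0$ it diverges, which matches the convention $-\ln 0=\infty$. Everything else is routine.
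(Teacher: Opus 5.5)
Your proposal is correct and follows essentially the same route as the paper's proof (\cref{thm:cost_amnesia_parallel}). There, too, each independent decision is integrated against the unnormalized weight $dc/c$, giving $p_k$ for each distractor and $p_i-\ln p_i$ for the true class, and the sum then collapses via $\sum_k p_k=1$. Your explicit remark that $w$ must be read as an improper weight with unit constant, and your check that each integral stays finite, clarify a point the paper addresses only by noting that the linear action cost absorbs the pole at zero.
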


The proof is in Appendix~\ref{apdx:logloss_search}.
The treatment costs in the parallel decision model sum to $1$ regardless of the prediction, so log loss compares forecasts only through undertreatment of the true class.

\subsubsection{Log Loss as Fixed-Order Sequential Search}
\label{apdx:ce_sequential}

A sequential reinterpretation exists, but requires fixing the search order across all cost realizations.

\begin{proposition}[Cross-Entropy as Fixed-Order Sequential Search]
\label{thm:sequential_logloss}
Assume a fixed search order $\sigma$.
If testing costs are drawn independently from the Haldane measure $w(c) \propto \frac{1}{c(1-c)}$, and the fixed miss penalty is $1$, the expected search cost until the true class $m$ is found is
\(S_{\mathrm{seq}}(\vec{p}, m) \;=\; -\ln p_m \).
\end{proposition}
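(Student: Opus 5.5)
The plan is to reduce the fixed-order search to a chain of binary decisions, one per position in $\sigma$, and then use the classical binary fact \cite{schervish89} that integrating cost-weighted threshold losses against the Haldane measure yields binary log loss. The sum of the per-step losses will then telescope to $-\ln p_m$ through the chain rule of probability.

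\textbf{Step 1 (conditional forecasts along $\sigma$).} Suppose the search has reached position $k$, so classes $\sigma_1,\dots,\sigma_{k-1}$ have been ruled out. The forecast then implies the conditional probability
\[
q_k := \frac{p_{\sigma_k}}{1-\sum_{l<k} p_{\sigma_l}}
\]
that $\sigma_k$ is the true class.

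I would make the per-step decision problem precise as follows, with the miss penalty normalized to $1$ as in the binary Schervish construction. The decision-maker draws an independent cost $c_k\sim w$ and either acts on $\sigma_k$ or passes. Acting incurs $c_k$ if $\sigma_k$ is wrong; passing incurs $1-c_k$ if $\sigma_k$ is right. The Bayes action is therefore to act iff $q_k>c_k$. Since the order $\sigma$ is fixed and the costs are independent, the steps decouple. The realized total cost until $m$ is found is then the sum of the step losses for $k\le \sigma^{-1}(m)$.

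\textbf{Step 2 (binary Haldane integrals).} For a step at which the current class is the true one ($y=1$), the expected loss is
\[
\int_{q}^{1}\frac{1-c}{c(1-c)}\,dc=-\ln q .
\]
For a step at which it is wrong ($y=0$), the expected loss is
\[
\int_0^{q}\frac{c}{c(1-c)}\,dc=-\ln(1-q).
\]
Both integrals converge for $q\in(0,1)$, even though the Haldane measure is improper. This is why the density $1/(c(1-c))$, rather than the density $1/c$ used in \cref{thm:logloss_structure}, is needed here.

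\textbf{Step 3 (telescoping).} Let $k^*=\sigma^{-1}(m)$. Summing the step losses gives
\[
S_{\mathrm{seq}}(\vec p,m)=-\sum_{k<k^*}\ln(1-q_k)-\ln q_{k^*} .
\]
The factor $1-q_k$ equals $\bigl(1-\sum_{l\le k}p_{\sigma_l}\bigr)/\bigl(1-\sum_{l<k}p_{\sigma_l}\bigr)$. Hence the product $\prod_{k<k^*}(1-q_k)$ telescopes to $1-\sum_{l<k^*}p_{\sigma_l}$. Multiplying by $q_{k^*}$ cancels this remaining denominator and leaves exactly $p_m$, so $S_{\mathrm{seq}}(\vec p,m)=-\ln p_m$. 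Because this value does not depend on $\sigma$, the within-instance ranking is irrelevant, which is the point of the proposition.

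\textbf{Main obstacle.} The delicate part is Step 1: pinning down a cost model in which fixed-order search exactly decomposes into these normalized binary losses. In particular, one must treat the wrong-test cost as $c$ and the miss cost as $1-c$, which is the standard normalization behind "fixed miss penalty $1$." One must also check that the steps after $m$ contribute nothing, and handle boundary cases where $q_k\in\{0,1\}$ (for instance, the last class in $\sigma$ has $q=1$ and contributes $0$). If the paper intends an unnormalized miss penalty of $1$, I would first try to absorb the difference into a reweighting of the Haldane density so that the same integrals appear.
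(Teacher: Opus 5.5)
Your proposal is correct and is essentially the paper's proof: your per-step losses $c$ (acting on a wrong class) and $1-c$ (passing on the true class) are exactly the paper's stepwise regret relative to the ex post optimal action under act cost $c$ and miss cost $1$, and the two Haldane integrals $-\ln(1-q_k)$ and $-\ln q_{k^*}$, together with the telescoping product $\prod_{k<k^*}(1-q_k)\,q_{k^*}=p_m$, are the same. Your fallback of reweighting the density to handle a raw miss penalty of $1$ would not work, because matching the $k^*=1$ and $k^*=2$ cases forces the Haldane weight, whose raw miss integral $\int_q^1 w(c)\,dc$ diverges at $c=1$; this is why the paper's remark instead truncates near $c=1$ and subtracts an order-independent baseline.
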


Under this interpretation, the search suffers overtreatment regret for each class examined before the truth, and then undertreatment regret for stopping at the true class. Algebraically, the intermediate terms reduce to log conditional survival probabilities, while the true step contributes the log conditional probability of stopping. These terms telescope, leaving only the log probability of the true class (see \cref{apdx:cost_amnesia_proof} for the full proof).

Thus even in a step-by-step search model, log loss remembers only the probability assigned to the true class and perfectly forgets the economic penalties of early wrong turns. As detailed in \cref{rmk:haldane_critical}, this behavior is extremely fragile: it strictly forbids cost-adaptive ordering and relies entirely on the knife-edge Haldane measure to function.

\subsection{Order Sensitive: Top-1 Measures}

Accuracy, the most popular measure, evaluates whether $\arg\max_k p_k = i$.
Its implicit decision model is a single-shot top-1 choice: select one class and incur unit cost if that choice is wrong.
This is useful in some settings, but in most diagnostic settings, clinical or otherwise, the search will not terminate after one failure, so accuracy fails to capture most of the relevant information about order.
Moreover, it is not strictly proper as it is flat almost everywhere, and so cannot incentivize correct probability forecasts.
Details for all these statements appear in Appendix~\ref{apdx:accuracy_proofs}.

Macro-F1, the unweighted average of per-class F1 scores, while very popular, is not usually presented as a decision model.
However, building on the local gradient characterization of \citet{narasimhan15}, the classification incentives for the marginal patient can be interpreted as a classwise decision rule with adaptive rewards and penalties that depend on the rest of the patients.
These do not, however, align with the costs for the individual patient being diagnosed.
Thus it inherits the minimal order sensitivity and lack of strict propriety of accuracy, to which it adds poorer decision alignment and a lack even of propriety.
A formal statement and proof appear in Appendix~\ref{apdx:f1_proofs}.
\paragraph{Summary.}
Among the standard alternatives, log loss is the strongest: it is proper and admits clear decision-theoretic interpretations. But both of those interpretations erase the within-instance ordering information that matters for sequential search. Accuracy is simpler still, reducing prediction to a single top-1 decision and discarding all information about what should happen after an initial mistake. Macro-F1 is the least compatible with our setting because it violates instance-level independence and evaluates one example partly through its interaction with the rest of the dataset. These contrasts clarify the role of Pandora's Regret: it is not merely another metric with different weights, but a scoring rule derived from a different decision model.

\section{Empirical Validation}
\label{sec:empirical}

Pandora's Regret aligns well with our theoretical desiderata, but we also want to know how well it chooses among real models on real data.
We operationalize this question using medical image classification tasks for which we can write a simple task-specific diagnostic cost model.
We then rank a zoo of nearly six hundred image models by the task-specific diagnostic costs they would produce if used to choose a testing plan.
We rerank as well by standard metrics, and compare those rankings using the absolute value of Kendall's $\tau$ (some metrics are low for good models, and others high).

{\bf Model Zoo.}\quad
Because the purpose is to understand the correlation between metrics' rankings of model quality rather than to set state of the art, we use nearly six hundred pretrained models from the PyTorch Image Models (TIMM) library \citep{wightman19} that accept 224x224 inputs and have fewer than 50M parameters (to keep our runtime within one GPU day), with a linear probe head trained based on cross entropy (rather than Pandora's Regret) on our datasets for 1 epoch.  Training details are the same for all models, and provided in \cref{apdx:training}.  However, we do not report the ranking of this zoo on the downstream tasks, but rather its correlation with the rankings of the models under Pandora's Regret and the various standard metrics.

{\bf MedMNIST Data.}\quad
We use the two imaging datasets from the MedMNIST benchmark for which good computer vision performance is established, and for which there are more than two classes that map to clinically distinct treatment actions: DermaMNIST (7 skin lesion classes) and OCTMNIST (4 retinal conditions).  We use the 224x224 originals rather than the 28x28 downsampled versions for more reasonable quality, train on the training split for 1 epoch, and evaluate on the held-out test split.

\subsection{Task Specific Diagnostic Cost Model}

For each class, we assign a diagnostic cost reflecting the real clinical workup triggered by suspicion of that condition (e.g., melanoma requires biopsy plus histopathology plus immunohistochemistry at \$383.77; benign conditions require only a specialist visit at \$75.15; see \cref{app:costs} and \cref{tab:cost_vector}).
Given a model's costs and predicted probabilities, we simulate an optimal sequential search that tests hypotheses in descending order of probability-to-cost ratio, accumulating costs until the true condition is identified.
We use the perfect-test, zero-treatment-cost simplification; \cref{apdx:imperfect_tests} shows that relaxing these assumptions modifies effective per-class costs but preserves the same order structure.
\setlength{\floatsep}{3pt}
\setlength{\intextsep}{3pt}
\setlength{\textfloatsep}{3pt}

\begin{table}[!htbp]
  \centering
  \begin{tabular}{l l c c c c}
    \toprule
    Target & Dataset & $S_{\text{Pandora}}$ \textbf{i.i.d.} & Log Loss & Accuracy & F1 (macro) \\
    \midrule
    \textbf{Clinical} & Derm & \textbf{0.86} & 0.84 & 0.77 & 0.74 \\
     & OCT & \textbf{0.90} & 0.88 & 0.87 & 0.88 \\
    \midrule
    \textbf{Well Specified} & Derm & \textbf{0.93} & 0.90 & 0.84 & 0.79 \\
     & OCT & \textbf{0.96} & 0.91 & 0.84 & 0.86 \\
    \midrule
    \textbf{Random} & Derm & \textbf{0.82} & 0.65 & 0.62 & 0.62 \\
    \textbf{Temperature} & OCT & \textbf{0.87} & 0.71 & 0.85 & 0.86 \\
    \midrule
    \textbf{Random} & Derm & \textbf{0.75} & 0.19 & 0.38 & 0.17 \\
    \textbf{Neg. Temp.} & OCT & \textbf{0.87} & 0.84 & 0.72 & 0.73 \\
    \bottomrule
    \end{tabular}
  \caption{
    Absolute value of Kendall's $\tau$ between model rankings under each metric (column) and task-specific simulated search cost.
    Conditions are described in \cref{sec:meta_eval}. 
    See \cref{fig:tau_ci} for confidence intervals.
}
  \label{tab:tau}
  \small
\end{table}
\subsection{Model Ranking Meta-evaluation}
\label{sec:meta_eval}
We compare the performance of Pandora's Regret, Log Loss, Accuracy, and Macro F1 on the DermaMNIST and OCTMNIST datasets across four conditions on costs and predictions. Across all conditions, Pandora’s Regret yields rankings that more closely align with downstream search cost than standard metrics, with the gap widening when assumptions match the model or when distractor distributions are perturbed.

\begin{itemize}[leftmargin=*]
  \item \textbf{Clinical}: Uses the task-specific diagnostic cost model.
  \item \textbf{Well-Specified}: Draws class costs from $\mathrm{Unif}[0,1]$ to match the model assumptions. Pandora’s Regret shows a larger advantage, as expected.
  \item \textbf{Random Temperature}: Applies lognormal temperature scaling to vary prediction confidence. Pandora’s Regret maintains a stronger lead over log loss, though accuracy and F1 remain competitive on OCTMNIST.
  \item \textbf{Random Distractor Temperature}: Rescales only distractor classes to vary their distribution while preserving true-class calibration, isolating sensitivity to negative-class probabilities.
\end{itemize}

\begin{figure}[!htbp]
  \centering
  \includegraphics[width=\textwidth]{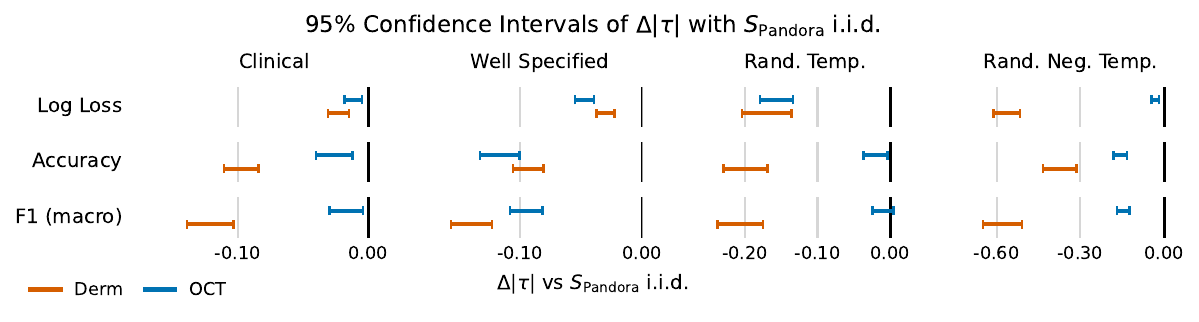}
  \caption{
    Confidence intervals for the gap $\Delta|\tau|$ between each metric and $S_\text{Pandora}$.
    Conditions (columns) are described in \cref{sec:meta_eval}.
    The confidence interval for F1 on random temperature rescalings of OCTMNIST predictions overlaps zero; all others are bounded below zero.
    See \cref{tab:tau} for point estimates of Kendall's $\tau$ itself.
  }
  \label{fig:tau_ci}
\end{figure}

\setlength{\floatsep}{12pt plus 2pt minus 2pt}
\setlength{\intextsep}{12pt plus 2pt minus 2pt}
\setlength{\textfloatsep}{20pt plus 2pt minus 4pt}
This is consistent with the theoretical analysis: metrics that incorporate within-instance ordering better capture the cost of early mistakes in sequential search.

\section{Conclusion, Limitations, and Future Work}
\label{sec:conclusion}
Evaluation criteria are products of the decisions they support.
In sequential settings, where costs accrue as alternatives are ruled out, forecast quality depends not just on the probability assigned to the true class, but on how competing alternatives are ordered.
We formalize this as an optimal search problem and derive a corresponding class of scoring rules.
The resulting score, Pandora’s Regret, is strictly proper and pairwise-additive, and directly reflects expected excess search cost.
The associated Beta family shows that even within this decision model, evaluation spans a continuum between sensitivity to probability magnitudes and to within-instance ranking.

This perspective also clarifies standard metrics.
Log loss, accuracy, and macro-F1 correspond to distinct implicit decision models (parallel decisions, single-shot selection, and population-level tradeoffs) and their failures in sequential, cost-sensitive settings are therefore structural.
Practically, this suggests that when decisions are sequential and verification is heterogeneous, evaluation should reward forecasts that induce efficient search orderings, not just correct final predictions.
Pandora’s Regret provides a tractable instance of this principle, but the broader implication is methodological: evaluation metrics should be derived from the decision processes they are intended to approximate.

{\bf Limitations.} Our analysis relies on a stylized sequential search model with perfect tests and simplified costs. While many extensions can be absorbed into effective per-class costs, richer settings such as tests that inform multiple hypotheses or decisions with path-dependent utilities may not admit a pairwise additive form, trading tractability for expressiveness. Empirically, we evaluate on two MedMNIST tasks with approximate diagnostic costs. These provide initial evidence of decision alignment, but do not capture the variability of real-world settings, including patient-specific costs, institutional differences, and complex workflows. Broader validation across domains and cost structures is needed to assess robustness.

{\bf Future Directions.} An important direction is extending this framework to richer decision settings, including hierarchical testing, partially informative actions, and adaptive decision processes. The challenge is to retain propriety and tractability in these more complex environments. More broadly, this work suggests a shift from selecting generic metrics to deriving evaluation criteria from decision structure. Sequential search provides one tractable case. Understanding how far this approach extends, and how it interacts with fairness, robustness, and deployment constraints, remains an open question.

\bibliographystyle{plainnat}
\bibliography{references,empirical/apdx_costs,alternatives/apdx_f1}

\newpage
\appendix
\onecolumn

\section{Search-Based Scoring Rules}
\label{apdx:search_scoring}

The appendix formalizes the search-based view of scoring used in the main text.
Given a forecast $\vec p$ and realized testing costs $\vec c$, the optimal policy searches classes in decreasing order of $p_k/c_k$.
For any pair of classes $i$ and $j$, this reduces to a single threshold comparison: does the realized cost ratio $c_i/c_j$ exceed the forecast odds ratio $p_i/p_j$?

This ratio-vs.-ratio view drives everything that follows.
First, it yields a clean pairwise decomposition of the total expected search cost into one distractor term for each class pair (\cref{apdx:pairwise_decomposition}).
Second, it makes propriety essentially immediate: under the true label distribution $\vec\pi$, the Bayes-optimal threshold for pair $\{i,j\}$ is exactly $\pi_i/\pi_j$, so each pairwise term is minimized when the forecast odds match the true odds (\cref{apdx:properness}).
Third, deterministic class costs enter only through a rescaling of the odds, from $p_i/p_j$ to $(p_i/C_i)/(p_j/C_j)$, without changing the underlying mechanism (\cref{apdx:known_cost}).
Finally, specializing the unit costs to i.i.d.\ $\mathrm{Beta}(\alpha,1)$ draws yields the closed-form family used in the main text, with $\alpha$ interpolating between smoother magnitude-sensitive penalties and increasingly rank-like behavior (\cref{apdx:beta_family}).

\subsection{Pairwise decomposition}
\label{apdx:pairwise_decomposition}

Given realized costs $\vec c=(c_1,\dots,c_K)$, the optimal search policy
inspects classes in decreasing order of $p_k/c_k$.
Ties are broken uniformly at random, independently of $(y,\vec c)$.
The appendix uses one observation repeatedly:
for any pair $i\neq j$,
\[
  r_{ij}:=\frac{p_i}{p_j},
  \qquad
  \tau_{ij}:=\frac{c_i}{c_j},
\]
and class $j$ is searched before class $i$ exactly when
$\tau_{ij}>r_{ij}$, with random tie-breaking when $\tau_{ij}=r_{ij}$.
Thus the multiclass search cost decomposes into pairwise distractor terms,
and properness reduces to the fact that under true label probabilities
$\vec\pi$, the Bayes-optimal threshold for pair $\{i,j\}$ is
$\pi_i/\pi_j$.

It is convenient to write
\[
  h(a,b)
  \;:=\;
  \ind(a>b)+\tfrac12\ind(a=b),
\]
so that $h(a,b)$ is the probability that the first quantity beats the second
under this tie-breaking rule.

\begin{theorem}[Search cost decomposition]
\label{thm:decomposition}
Let $\vec p\in\mathrm{int}(\Delta^{K-1})$, let $i$ be the true class, and
let $\vec c=(c_1,\dots,c_K)$ be drawn from a distribution $F$ on
$\mathbb R_{>0}^K$ with $\E_F[c_k]<\infty$ for all $k$.
Under the optimal sequential search policy,
\begin{equation}
\label{eq:search_cost}
  S_F(\vec p,y{=}i)
  \;=\;
  \E_F[c_i]
  \;+\;
  \sum_{j\neq i}
  L_F^{i\to j}\!\left(\frac{p_i}{p_j}\right),
\end{equation}
where the directed pairwise loss is
\begin{equation}
\label{eq:directed_pairwise_loss}
  L_F^{i\to j}(r)
  \;:=\;
  \E_F\!\left[
    c_j\,h\!\left(\frac{c_i}{c_j},\,r\right)
  \right].
\end{equation}
Equivalently,
\[
  L_F^{i\to j}(r)
  \;=\;
  \E_F\!\left[
    c_j\!\left(
      \ind\!\left(\frac{c_i}{c_j}>r\right)
      +
      \tfrac12\ind\!\left(\frac{c_i}{c_j}=r\right)
    \right)
  \right].
\]
Moreover, $L_F^{i\to j}$ depends on $F$ only through the joint law of
$(c_i,c_j)$.
\end{theorem}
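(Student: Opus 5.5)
The plan is to write the realized search cost pathwise as a sum of indicator-weighted costs, take expectations term by term, and then read off the pairwise structure. Fix $\vec p\in\mathrm{int}(\Delta^{K-1})$ and the true class $i$. For a realized cost vector $\vec c$, the optimal policy tests classes in decreasing order of $p_k/c_k$, with ties broken uniformly at random. Since tests are definitive, the search stops exactly when $i$ is tested. The realized cost is therefore
\[
C(\vec c)\;=\;c_i+\sum_{j\neq i} c_j\,\ind(j\text{ tested before } i).
\]
This identity holds deterministically given the tie-breaking randomness. Linearity will let us handle each distractor separately.

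Next I identify the event for each pair. Because $p_i,p_j,c_i,c_j>0$, the condition $p_j/c_j>p_i/c_i$ is equivalent to $c_i/c_j>p_i/p_j$, that is, $\tau_{ij}>r_{ij}$. The pair ties exactly when $\tau_{ij}=r_{ij}$. One point needs care, because uniform random tie-breaking among a multi-way tie must give each pair a conditional probability of exactly $\tfrac12$. I handle it by realizing the tie-breaking as an independent uniformly random permutation of the tied block. By exchangeability, any two tied classes then appear in either order with probability $\tfrac12$, independently of which other classes share the tie. Hence
\[
\Pr(j\text{ before } i\mid \vec c)=h(\tau_{ij},r_{ij}).
\]
This holds because the relative order of $i$ and $j$ is determined by their own ratio unless they tie.

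Then I take expectations. I condition on $\vec c$ and use the tower property. Each term $c_j\,h(\cdot)$ is nonnegative and bounded by $c_j$, and $\E_F[c_j]<\infty$, so each expectation is finite and linearity applies; Tonelli also suffices. This gives
\[
S_F(\vec p,i)=\E_F[c_i]+\sum_{j\ne i}\E_F\!\left[c_j\,h(c_i/c_j,\,p_i/p_j)\right],
\]
which is \eqref{eq:search_cost} with $L_F^{i\to j}$ as in \eqref{eq:directed_pairwise_loss}. Expanding the definition of $h$ gives the equivalent indicator form stated in the theorem.

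Finally, the integrand $c_j\,h(c_i/c_j,r)$ is a measurable function of $(c_i,c_j)$ alone. Its expectation therefore depends only on the bivariate marginal of $(c_i,c_j)$. The independence $\vec c\perp y$ is used only implicitly, since the costs' law does not change when we condition on $y=i$. The main obstacle is the tie-breaking step, where we must justify that pairwise order probabilities are exactly $\tfrac12$ under multi-way ties. The random-permutation construction resolves this; everything else is linearity.
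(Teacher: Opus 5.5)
Your proposal is correct and follows essentially the same route as the paper. Both proofs write the realized cost pathwise as $c_i+\sum_{j\neq i}c_j\,h(\tau_{ij},r_{ij})$ via the equivalence $p_j/c_j>p_i/c_i \iff c_i/c_j>p_i/p_j$ and the $\tfrac12$ tie rule, then apply linearity of expectation. Your exchangeability argument for multi-way ties and your integrability check simply make explicit details the paper asserts without elaboration.
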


\begin{proof}
Fix a realized cost vector $\vec c$.
If the true class is $i$, the searcher always pays the unavoidable cost
$c_i$.
Each distractor $j\neq i$ contributes an additional cost $c_j$ exactly when
$j$ is searched before $i$.

For the pair $(i,j)$, let
\[
  r_{ij}=\frac{p_i}{p_j},
  \qquad
  \tau_{ij}=\frac{c_i}{c_j}.
\]
Since classes are searched in decreasing order of $p_k/c_k$,
\[
  \frac{p_j}{c_j}>\frac{p_i}{c_i}
  \quad\Longleftrightarrow\quad
  \frac{c_i}{c_j}>\frac{p_i}{p_j}
  \quad\Longleftrightarrow\quad
  \tau_{ij}>r_{ij}.
\]
On the tie event $\tau_{ij}=r_{ij}$, the tied-block rule places $j$ before
$i$ with probability $1/2$.
Hence the realized total search cost is
\[
  c_i
  +
  \sum_{j\neq i}
  c_j\,h(\tau_{ij},r_{ij}).
\]
Taking expectations gives \cref{eq:search_cost}.
Each summand involves only $(c_i,c_j)$, so $L_F^{i\to j}$ depends only on
their joint law.
\end{proof}

\subsection{Properness and strict propriety}
\label{apdx:properness}

Properness is now a pairwise statement.
For each unordered pair $\{i,j\}$, the score chooses a search order using
the threshold $p_i/p_j$, and the Bayes-optimal threshold turns out to be the
true odds ratio $\pi_i/\pi_j$.

\begin{theorem}[Properness and strict propriety]
\label{thm:properness}
Let $\vec p\in\mathrm{int}(\Delta^{K-1})$, let
$\vec c=(c_1,\dots,c_K)$ be drawn from a distribution $F$ on
$\mathbb R_{>0}^K$ with $\E_F[c_k]<\infty$ for all $k$, and assume
$\vec c$ is independent of $y$.
Then $S_F$ is proper.

It is strictly proper if, for every $i\neq j$, the ratio $c_i/c_j$ has full
support on $(0,\infty)$, meaning that every nonempty open interval in
$(0,\infty)$ has positive probability.
\end{theorem}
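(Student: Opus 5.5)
Starting from \cref{thm:decomposition}, I will write the expected score under a true label distribution $\vec\pi\in\Delta^{K-1}$ and group terms by unordered pairs. The first term $\sum_i \pi_i\E_F[c_i]$ does not depend on $\vec p$. The remaining terms $\sum_i\pi_i\sum_{j\ne i}L_F^{i\to j}(p_i/p_j)$ regroup, for each unordered pair $\{i,j\}$, into
\[
\Phi_{ij}(\vec p) := \E_F\!\left[\pi_i\, c_j\, h\!\left(\tfrac{c_i}{c_j},\tfrac{p_i}{p_j}\right) + \pi_j\, c_i\, h\!\left(\tfrac{c_j}{c_i},\tfrac{p_j}{p_i}\right)\right].
\]
Because $\vec c\perp y$, I can take the expectation over $y$ inside, pointwise in $\vec c$. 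For a fixed realization, the bracket is the expected extra cost attributable to the pair $\{i,j\}$ under the order that the forecast induces on that pair. Putting $j$ first costs $\pi_i c_j$, putting $i$ first costs $\pi_j c_i$, and on ties the cost is the average of the two. The forecast puts $j$ first exactly when $c_i/c_j>p_i/p_j$, and ties correspond to $h=\tfrac12$ on both sides, since $h(a,b)+h(1/a,1/b)=1$.

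Pointwise in $\vec c$, the bracket is therefore at least $\min(\pi_i c_j,\pi_j c_i)$. Equality holds exactly when the forecast picks the cheaper order, i.e.\ puts $j$ first when $c_i/c_j>\pi_i/\pi_j$ and $i$ first when $c_i/c_j<\pi_i/\pi_j$. The truthful forecast $\vec p=\vec\pi$ achieves this for every $\vec c$, since its threshold is exactly $\pi_i/\pi_j$; at ties both orders cost the same. So $\Phi_{ij}(\vec\pi)\le\Phi_{ij}(\vec p)$ for every pair, and summing gives propriety. Finiteness of $\E_F[c_k]$ makes all expectations finite, so the subtraction is legitimate. One technicality needs a sentence: if $\vec\pi$ lies on the boundary of the simplex, the ratio $\pi_i/\pi_j$ may be $0$ or $\infty$, and I will handle this by a convention or by restricting to the interior as in the statement.

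For strict propriety, take $\vec p\ne\vec\pi$ with both in the interior. Two distinct points of the simplex cannot share all pairwise ratios, because the ratios together with normalization determine the point. So there is a pair with $r:=p_i/p_j\ne \rho:=\pi_i/\pi_j$; say $r<\rho$. On the event $\tau=c_i/c_j\in(r,\rho)$, the forecast puts $j$ first, costing $\pi_i c_j$, while the optimum is $\pi_j c_i$. The pointwise excess is $\pi_i c_j-\pi_j c_i=c_j\pi_j(\rho-\tau)>0$. The full-support assumption gives this open interval positive probability, and the excess is strictly positive on it, so $\Phi_{ij}(\vec p)>\Phi_{ij}(\vec\pi)$. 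The other pairs contribute nonnegative excess, so the total is strict. The case $r>\rho$ is symmetric.

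The main obstacle is bookkeeping rather than depth. The regrouping must count each unordered pair exactly once. The tie-breaking function $h$ must be consistent across the two directed terms, and I will check that identity explicitly. Positivity of the integral over the event must be justified; this follows because $c_j>0$ almost surely and $\rho-\tau>0$ on the event.
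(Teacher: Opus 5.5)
Your proposal is correct and follows essentially the same route as the paper's proof. Both regroup the Bayes risk into unordered-pair terms $\Phi_{ij}$ and compare pointwise in $\vec c$ to show that the threshold $\pi_i/\pi_j$ always picks the cheaper order for the pair. Both get strictness from the positive-probability event that $c_i/c_j$ falls strictly between the forecast odds and the true odds. Your second directed term $h(c_j/c_i,\,p_j/p_i)$ equals the paper's $h(p_i/p_j,\,c_i/c_j)$, so the bookkeeping agrees.
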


\begin{proof}
Let $\vec\pi\in\mathrm{int}(\Delta^{K-1})$ be the true conditional label
distribution, and define the Bayes risk
\[
  \mathcal R(\vec p;\vec\pi)
  \;:=\;
  \sum_{k=1}^K \pi_k\,S_F(\vec p,y{=}k).
\]
By \cref{thm:decomposition},
\[
  \mathcal R(\vec p;\vec\pi)
  \;=\;
  \sum_{k=1}^K \pi_k\,\E_F[c_k]
  \;+\;
  \sum_{i<j}
  \Phi_{ij}\!\left(\frac{p_i}{p_j}\right),
\]
where
\[
  \Phi_{ij}(r)
  \;:=\;
  \E_F\!\left[
    \pi_i c_j\,h\!\left(\frac{c_i}{c_j},\,r\right)
    +
    \pi_j c_i\,h\!\left(r,\,\frac{c_i}{c_j}\right)
  \right].
\]
The first term is independent of $\vec p$, so it is enough to minimize each
pairwise term $\Phi_{ij}$.

Fix a pair $\{i,j\}$ and write
\[
  r:=\frac{p_i}{p_j},
  \qquad
  r^\star:=\frac{\pi_i}{\pi_j},
  \qquad
  \tau:=\frac{c_i}{c_j}.
\]
For fixed realized costs $(c_i,c_j)$, there are only two relevant search
orders for this pair:

\medskip
\noindent
(i) search $j$ before $i$, which contributes $\pi_i c_j$ to the conditional
Bayes risk;

\noindent
(ii) search $i$ before $j$, which contributes $\pi_j c_i$.

\medskip
Hence the better order is
\[
  j \prec i
  \quad\Longleftrightarrow\quad
  \pi_i c_j \le \pi_j c_i
  \quad\Longleftrightarrow\quad
  \frac{c_i}{c_j} \ge \frac{\pi_i}{\pi_j}
  \quad\Longleftrightarrow\quad
  \tau \ge r^\star.
\]
So the Bayes-optimal threshold for the pair is exactly $r^\star$:
search $j$ before $i$ when $\tau>r^\star$, search $i$ before $j$ when
$\tau<r^\star$, and either order is optimal on the tie event
$\tau=r^\star$.

But the score with threshold $r$ implements precisely this threshold rule:
it searches $j$ before $i$ when $\tau>r$, searches $i$ before $j$ when
$\tau<r$, and randomizes on $\tau=r$.
Therefore, for every realized $(c_i,c_j)$,
\[
  \pi_i c_j\,h(\tau,r)+\pi_j c_i\,h(r,\tau)
  \;\ge\;
  \pi_i c_j\,h(\tau,r^\star)+\pi_j c_i\,h(r^\star,\tau).
\]
Taking expectations gives
\[
  \Phi_{ij}(r)\ge \Phi_{ij}(r^\star).
\]
Thus each pairwise Bayes-risk term is minimized at
$r^\star=\pi_i/\pi_j$.

Since this holds for every unordered pair,
$\mathcal R(\vec p;\vec\pi)$ is minimized whenever
\[
  \frac{p_i}{p_j}=\frac{\pi_i}{\pi_j}
  \qquad\text{for all } i\neq j.
\]
On $\mathrm{int}(\Delta^{K-1})$, these pairwise ratio constraints are
equivalent to $\vec p=\vec\pi$.
Hence $S_F$ is proper.

For strict propriety, fix a pair $\{i,j\}$ and suppose $r\neq r^\star$.
Then the open interval between $r$ and $r^\star$ is nonempty.
If $c_i/c_j$ has full support on $(0,\infty)$, that interval has positive
probability.
On that event, the threshold rule at $r$ chooses the wrong order relative to
the Bayes-optimal threshold $r^\star$, so the excess pairwise cost is
strictly positive:
\[
  \bigl|\pi_j c_i-\pi_i c_j\bigr|
  \;=\;
  c_j\,\bigl|\pi_j\tau-\pi_i\bigr|
  \;>\; 0.
\]
Hence $\Phi_{ij}(r)>\Phi_{ij}(r^\star)$ whenever $r\neq r^\star$.
So every pairwise term is uniquely minimized at the true odds, and the full
Bayes risk is uniquely minimized at $\vec p=\vec\pi$.
Therefore $S_F$ is strictly proper.
\end{proof}

\begin{remark}[Finitely supported empirical priors]
A finitely supported empirical or bootstrapped prior over costs is fully
compatible with the construction above.
Such priors are generally proper but not strictly proper:
only finitely many pairwise threshold values can occur, so the expected
score is typically piecewise flat as a function of the forecast odds.
\end{remark}

\subsection{Known-cost rescaling}
\label{apdx:known_cost}

Deterministic class costs do not change the underlying unit-cost mechanism;
they only rescale the relevant odds.
This is the form used below to derive the weighted Beta family.

\begin{proposition}[Known-cost rescaling]
\label{prop:known_cost_rescaling}
Let $c_k=C_k u_k$ with $C_k>0$ fixed and
$(u_1,\dots,u_K)$ drawn from a joint distribution satisfying
$\E[u_k]<\infty$ for all $k$.
Define the cost-adjusted scores
\[
  s_k:=\frac{p_k}{C_k},
\]
and let $L_u^{i\to j}$ denote the directed pairwise loss induced by the
unit-cost process $(u_1,\dots,u_K)$ under the same tie-breaking convention
as above.
Then
\begin{equation}
\label{eq:known_cost_score}
  S(\vec p,y{=}i)
  \;=\;
  C_i\,\E[u_i]
  \;+\;
  \sum_{j\neq i}
  C_j\,L_u^{i\to j}\!\left(\frac{s_i}{s_j}\right).
\end{equation}
Moreover, if $u_i/u_j$ has full support on $(0,\infty)$ for every
$i\neq j$, then the weighted score is strictly proper.
\end{proposition}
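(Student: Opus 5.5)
The plan is to reduce both claims to \cref{thm:decomposition} and \cref{thm:properness} applied to the cost process $c_k=C_k u_k$, and then rewrite the resulting pairwise terms in terms of the unit-cost process.

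First, the realized costs $c_k=C_ku_k$ are strictly positive with $\E[c_k]=C_k\E[u_k]<\infty$. So \cref{thm:decomposition} applies directly and gives
\[
S(\vec p,y{=}i)=\E[c_i]+\sum_{j\neq i}\E\!\left[c_j\,h\!\left(\frac{c_i}{c_j},\frac{p_i}{p_j}\right)\right].
\]
The first term equals $C_i\E[u_i]$.

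For each summand, I would rewrite the ratio comparison:
\[
\frac{c_i}{c_j}>\frac{p_i}{p_j}
\iff \frac{u_i}{u_j}>\frac{p_i/C_i}{p_j/C_j}=\frac{s_i}{s_j},
\]
because $c_i/c_j=(C_i/C_j)(u_i/u_j)$. The tie event transforms in the same way, so $h(c_i/c_j,p_i/p_j)=h(u_i/u_j,s_i/s_j)$ pointwise. Pulling the constant $C_j$ out of $c_j=C_ju_j$ gives
\[
\E\!\left[c_j\,h(\cdot)\right]=C_j\,\E\!\left[u_j\,h\!\left(\frac{u_i}{u_j},\frac{s_i}{s_j}\right)\right]=C_j\,L_u^{i\to j}\!\left(\frac{s_i}{s_j}\right),
\]
which is \eqref{eq:known_cost_score}. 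This step is routine bookkeeping; the only care needed is that the tie-breaking convention carries over unchanged.

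For strict propriety, I would not reprove anything. I would invoke \cref{thm:properness} for the process $\vec c$, which is independent of $y$ because $\vec u$ is and $\vec C$ is deterministic. The only hypothesis to check is that $c_i/c_j=(C_i/C_j)(u_i/u_j)$ has full support on $(0,\infty)$. This follows because multiplication by the positive constant $C_i/C_j$ is a homeomorphism of $(0,\infty)$ mapping open intervals to open intervals. Concretely, $\Pr(c_i/c_j\in(a,b))=\Pr\big(u_i/u_j\in(aC_j/C_i,\,bC_j/C_i)\big)>0$. Hence the score equals $S_F$ for a cost law satisfying the full-support condition, and it is strictly proper.

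There is no real obstacle. The point most worth stating explicitly is that propriety concerns the forecast $\vec p$, not $\vec s$: the map $\vec p\mapsto\vec s$ is injective for fixed $\vec C$, so uniqueness of the minimizer transfers. Applying \cref{thm:properness} to $\vec c$ rather than $\vec u$ sidesteps this issue entirely, since that theorem is already stated in terms of $\vec p$.
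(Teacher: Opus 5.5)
Your proof is correct and follows essentially the same route as the paper: rewrite the comparison $c_i/c_j > p_i/p_j$ as $u_i/u_j > s_i/s_j$ (ties included), pull out $C_j$, and get strict propriety by applying the properness theorem to $\vec c$ after noting that scaling by $C_i/C_j$ preserves full support. The only difference is cosmetic: you invoke the decomposition theorem for $\vec c$ and transform each term, whereas the paper re-derives the search order directly from $p_k/c_k = s_k/u_k$.
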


\begin{proof}
Since $c_k=C_k u_k$,
\[
  \frac{p_k}{c_k}
  \;=\;
  \frac{p_k/C_k}{u_k}
  \;=\;
  \frac{s_k}{u_k}.
\]
So the search order under realized costs $(c_1,\dots,c_K)$ is exactly the
search order induced by the cost-adjusted scores $(s_1,\dots,s_K)$ and the
unit costs $(u_1,\dots,u_K)$.

Fix the true class $i$.
The unavoidable term is $c_i=C_i u_i$, whose expectation is
$C_i\,\E[u_i]$.
For each distractor $j\neq i$, the extra cost is $c_j=C_j u_j$, and it is
incurred exactly when $j$ is searched before $i$, i.e.
\[
  \frac{s_j}{u_j}>\frac{s_i}{u_i}
  \quad\Longleftrightarrow\quad
  \frac{u_i}{u_j}>\frac{s_i}{s_j},
\]
again with random tie-breaking on equality.
Therefore the expected contribution of distractor $j$ is
\[
  C_j\,\E\!\left[
    u_j\,h\!\left(\frac{u_i}{u_j},\,\frac{s_i}{s_j}\right)
  \right]
  \;=\;
  C_j\,L_u^{i\to j}\!\left(\frac{s_i}{s_j}\right).
\]
Summing over $j\neq i$ gives \cref{eq:known_cost_score}.

Finally, if $u_i/u_j$ has full support on $(0,\infty)$, then so does
\[
  \frac{c_i}{c_j}
  \;=\;
  \frac{C_i}{C_j}\,\frac{u_i}{u_j}.
\]
Hence \cref{thm:properness} applies directly to the cost process
$(c_1,\dots,c_K)$, so the weighted score is strictly proper.
\end{proof}

\subsection{A Beta family of search-based scoring rules}
\label{apdx:beta_family}

We now specialize the unit-cost process to iid
$\mathrm{Beta}(\alpha,1)$ draws.
This yields a one-parameter family of closed-form, pairwise-additive,
strictly proper scores.
The parameter $\alpha$ controls how concentrated the unit costs are near
$1$: large $\alpha$ makes the score increasingly rank-like, while small
$\alpha$ spreads mass toward $0$ and produces a smoother,
more magnitude-sensitive penalty.

Because $\mathrm{Beta}(\alpha,1)$ is continuous, the tie event in
\cref{eq:directed_pairwise_loss} has probability zero throughout this
subsection.

For $\alpha>0$, define
\begin{equation}
\label{eq:beta_shape}
  L_\alpha(r)
  \;:=\;
  \begin{cases}
    1+\left(1+\tfrac1\alpha\right)\!\left(1-r^\alpha\right),
      & r \le 1, \\[4pt]
    r^{-(\alpha+1)},
      & r > 1.
  \end{cases}
\end{equation}
and
\begin{equation}
\label{eq:beta_b}
  b_\alpha
  \;:=\;
  \frac{1}{\left(1+\tfrac1\alpha\right)\left(2+\tfrac1\alpha\right)},
\end{equation}

\begin{proposition}[Beta pairwise loss]
\label{prop:beta_pairwise}
Let $u_i,u_j \overset{\mathrm{iid}}{\sim} \mathrm{Beta}(\alpha,1)$ with
$\alpha>0$.
Then, for every $r>0$,
\begin{equation}
\label{eq:beta_pairwise_raw}
  \E\!\left[
    u_j\,\ind\!\left(u_j<\frac{u_i}{r}\right)
  \right]
  \;=\;
  b_\alpha\,L_\alpha(r).
\end{equation}
\end{proposition}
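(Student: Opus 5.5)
This is a direct computation with the Beta$(\alpha,1)$ density $f(u)=\alpha u^{\alpha-1}$ on $(0,1)$, whose CDF is $u^\alpha$. We condition on $u_j$ and write the event $u_j<u_i/r$ as $u_i>r u_j$. By independence this gives
\[
\E\!\left[u_j\,\ind\!\left(u_j<\tfrac{u_i}{r}\right)\right]
=\int_0^1 \alpha v^{\alpha}\,\Pr(u_i>rv)\,dv ,
\qquad \Pr(u_i>rv)=1-\min(rv,1)^\alpha .
\]
The integrand changes form at $v=1/r$, so we split into the cases $r\le 1$ and $r>1$.

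\textbf{Case $r\le 1$.} Here $rv\le 1$ for every $v\in(0,1)$, so the integral is
\[
\int_0^1 \alpha v^\alpha(1-r^\alpha v^\alpha)\,dv=\frac{\alpha}{\alpha+1}-\frac{\alpha r^\alpha}{2\alpha+1}.
\]

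\textbf{Case $r>1$.} Here $\Pr(u_i>rv)=1-r^\alpha v^\alpha$ on $v<1/r$ and vanishes on $v\ge 1/r$. So we integrate only over $(0,1/r)$:
\[
\int_0^{1/r}\alpha v^\alpha(1-r^\alpha v^\alpha)\,dv
=\frac{\alpha}{\alpha+1}r^{-(\alpha+1)}-\frac{\alpha}{2\alpha+1}r^{-(\alpha+1)}
=\frac{\alpha^2}{(\alpha+1)(2\alpha+1)}\,r^{-(\alpha+1)} .
\]
Rewriting the coefficient,
\[
\frac{\alpha^2}{(\alpha+1)(2\alpha+1)}=\frac{1}{(1+1/\alpha)(2+1/\alpha)}=b_\alpha ,
\]
which matches $b_\alpha L_\alpha(r)$ for $r>1$.

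\textbf{Matching the $r\le1$ case to $b_\alpha L_\alpha(r)$.} Expand
\[
b_\alpha L_\alpha(r)=b_\alpha\left(2+\tfrac1\alpha\right)-b_\alpha\left(1+\tfrac1\alpha\right)r^\alpha .
\]
Since $b_\alpha(2+1/\alpha)=\tfrac{1}{1+1/\alpha}=\tfrac{\alpha}{\alpha+1}$ and $b_\alpha(1+1/\alpha)=\tfrac{1}{2+1/\alpha}=\tfrac{\alpha}{2\alpha+1}$, this equals the case-$r\le1$ integral exactly.

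\textbf{Consistency checks and remarks.} At $r=1$ both branches give $b_\alpha$, so the formula is continuous there. The case $\alpha=1$ reproduces $b_1=1/6$ and the Pandora loss $3-2r$ and $r^{-2}$, consistent with \cref{def:pandora_regret}. Ties have probability zero because the law is continuous, so using $h$ or the strict indicator gives the same value.

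There is no real obstacle. The only step needing care is restricting the domain of integration to $(0,1/r)$ when $r>1$, and then the algebra rewriting the coefficients in terms of $b_\alpha$.
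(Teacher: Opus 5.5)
Your proof is correct and is essentially the paper's direct computation with a case split at $r=1$. The only difference is the order of integration: you condition on $u_j$ and use the survival function $1-\min(rv,1)^\alpha$ of $u_i$, whereas the paper conditions on $u_i$ and integrates the truncated first moment $m_\alpha(t)=\frac{\alpha}{\alpha+1}t^{\alpha+1}$ of $u_j$, which moves the domain-splitting step from the $r\le 1$ case to the $r>1$ case.
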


\begin{proof}
Write $f_\alpha(x)=\alpha x^{\alpha-1}$ for the density of
$\mathrm{Beta}(\alpha,1)$ on $(0,1)$.
For $t\in[0,1]$, define the truncated first moment
\[
  m_\alpha(t)
  \;:=\;
  \E\!\left[u_j\,\ind(u_j<t)\right]
  \;=\;
  \int_0^t y f_\alpha(y)\,dy
  \;=\;
  \frac{\alpha}{\alpha+1}t^{\alpha+1}.
\]
Conditioning the truncation threshold to $u_i$ gives
\[
  \E\!\left[
    u_j\,\ind\!\left(u_j<\frac{u_i}{r}\right)
  \right]
  \;=\;
  \E\!\left[
    m_\alpha\!\left(\min\!\left\{1,\frac{u_i}{r}\right\}\right)
  \right]
  \;=\;
  \frac{\alpha^2}{\alpha+1}
  \int_0^1
  x^{\alpha-1}
  \min\!\left\{1,\frac{x}{r}\right\}^{\alpha+1}
  dx.
\]

If $r>1$, then $x/r<1$ for all $x\in(0,1)$, so
\[
  \E\!\left[
    u_j\,\ind\!\left(u_j<\frac{u_i}{r}\right)
  \right]
  \;=\;
  \frac{\alpha^2}{\alpha+1}\,r^{-(\alpha+1)}
  \int_0^1 x^{2\alpha}\,dx
  \;=\;
  \frac{\alpha^2}{(\alpha+1)(2\alpha+1)}\,r^{-(\alpha+1)}
  \;=\;
  b_\alpha\,r^{-(\alpha+1)}
\]

If $r\le 1$, split the integral at $x=r$:
\begin{align*}
  \E\!\left[
    u_j\,\ind\!\left(u_j<\frac{u_i}{r}\right)
  \right]
  &=
  \frac{\alpha^2}{\alpha+1}
  \left[
    \int_0^r x^{\alpha-1}\left(\frac{x}{r}\right)^{\alpha+1}dx
    +
    \int_r^1 x^{\alpha-1}dx
  \right] \\
  &=
  \frac{\alpha^2}{\alpha+1}
  \left[
    \frac{r^\alpha}{2\alpha+1}
    \;+\;
    \frac{1-r^\alpha}{\alpha}
  \right] \\
  &=
  \frac{\alpha^2}{(\alpha+1)(2\alpha+1)}
  \left[
  \frac{
    \alpha + (1+\alpha)(1-r^\alpha)
  }{\alpha}
  \right] \\
  &=
  b_\alpha
  \left[
    1+\left(1+\tfrac1\alpha\right)(1-r^\alpha)
  \right]
\end{align*}

Combining the two cases gives $b_\alpha\,L_\alpha(r)$ as claimed.
\end{proof}

\begin{figure}[h]
  \centering
  \includegraphics[width=0.75\textwidth]{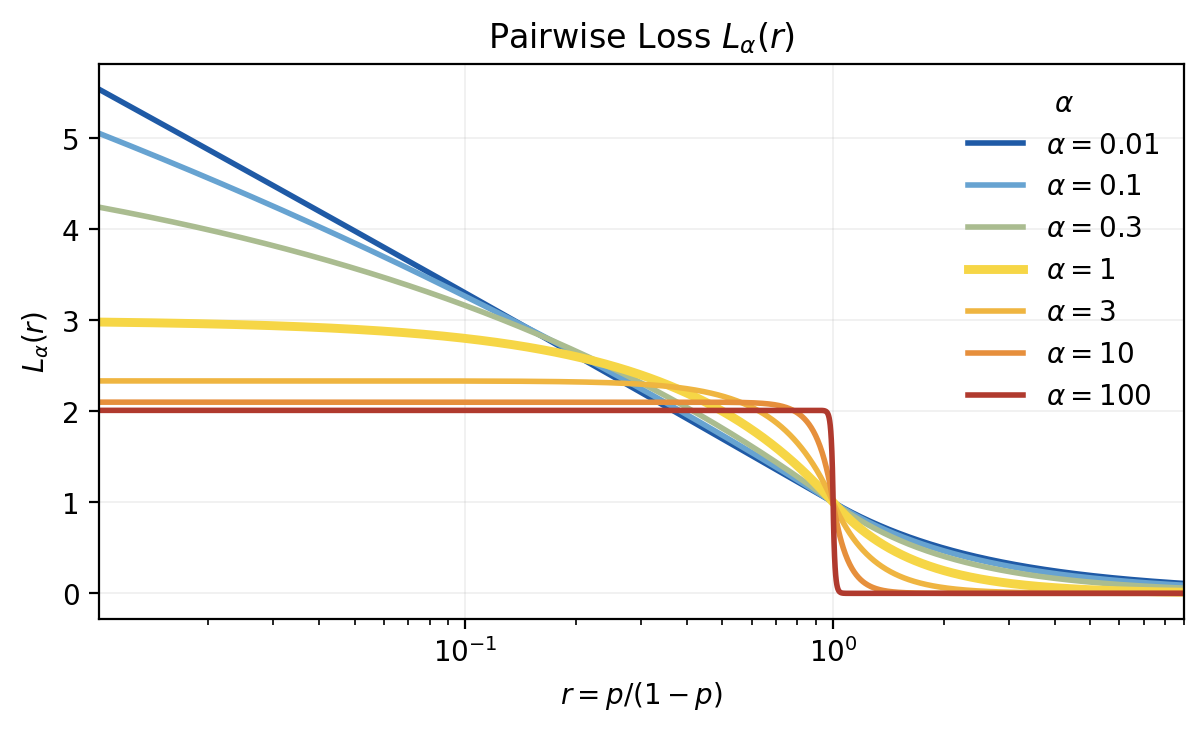}
  \caption{
    Normalized pairwise loss $L_\alpha(r)$ as a function of the odds ratio $r = p_i/p_j$ for several values of $\alpha$.
    All curves pass through $L_\alpha(1)=1$.
    As $\alpha\to\infty$, the loss converges to a step function (pure rank sensitivity).
    As $\alpha\to 0$, the left branch becomes logarithmic, penalizing underconfident forecasts more broadly.
  }
  \label{fig:gradient_beta}
\end{figure}

\begin{remark}[Gradient profile]
  \label{rmk:beta_gradient}
  Assume a softmax parameterization and write $\delta = z_i - z_j$ for the logit
  gap between the true class $i$ and a distractor $j$.
  Since $p_i/p_j = e^\delta$, the gradient of the pairwise loss term
  $L_\alpha(p_i/p_j)$ with respect to the distractor logit $z_j$ is
  \[
    \frac{\partial}{\partial z_j}
    L_\alpha\!\left(e^{z_i-z_j}\right)
    \;=\;
    -\frac{d}{d\delta} L_\alpha\!\left(e^\delta\right)
    \;=\;
    (\alpha+1)
    \begin{cases}
      e^{\alpha\delta}, & \delta \le 0, \\[4pt]
      e^{-(\alpha+1)\delta}, & \delta > 0.
    \end{cases}
  \]
  The gradient peaks when the true and distractor logits coincide ($\delta=0$) and decays exponentially on both sides, at rate $\alpha$ when the true class is behind ($\delta < 0$) and $\alpha+1$ when it is ahead ($\delta > 0$).
  Small $\alpha$ flattens the gradient for false classes ahead of the true class, while still decaying for false classes behind the true class.
  Large $\alpha$ steepens both decays, concentrating learning signal on near ties.
\end{remark}

\begin{corollary}[Weighted Beta score]
  \label{cor:beta_weighted}
  Let $c_k=C_k u_k$ with $C_k>0$ fixed and
  $u_k \overset{\mathrm{iid}}{\sim} \mathrm{Beta}(\alpha,1)$, where
  $\alpha>0$.
  Define
  \begin{equation}
  \label{eq:weighted_beta_score}
    S^{\mathrm{Beta}}_{\alpha,\mathbf C}(\vec p,i)
    \;:=\;
    \sum_{j\neq i}
    C_j\,L_\alpha\!\left(\frac{p_i/C_i}{p_j/C_j}\right).
  \end{equation}
  Then the expected search cost is
  \begin{equation}
  \label{eq:weighted_beta_affine}
    S(\vec p,y{=}i)
    \;=\;
    \frac{\alpha}{\alpha+1}\,C_i
    \;+\;
    b_\alpha\,S^{\mathrm{Beta}}_{\alpha,\mathbf C}(\vec p,i).
  \end{equation}
  Consequently, for every finite $\alpha>0$ and every choice of $C_k>0$,
  $S^{\mathrm{Beta}}_{\alpha,\mathbf C}$ is strictly proper.
  Equivalently, the expected search cost in
  \cref{eq:weighted_beta_affine} is strictly proper.
\end{corollary}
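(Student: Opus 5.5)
The argument combines \cref{prop:known_cost_rescaling} with \cref{prop:beta_pairwise}. First I will apply \cref{prop:known_cost_rescaling} with unit costs $u_k\overset{\mathrm{iid}}{\sim}\mathrm{Beta}(\alpha,1)$ and cost-adjusted scores $s_k=p_k/C_k$. Each $u_k$ lies in $(0,1)$, so $\E[u_k]<\infty$ and the proposition applies. It gives
\[
  S(\vec p,y{=}i)=C_i\,\E[u_i]+\sum_{j\neq i}C_j\,L_u^{i\to j}\!\left(\frac{s_i}{s_j}\right).
\]
The constant term is immediate, since $\E[u_i]=\int_0^1 x\cdot\alpha x^{\alpha-1}\,dx=\alpha/(\alpha+1)$.

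Next I will identify the directed pairwise loss $L_u^{i\to j}(r)=\E[u_j\,h(u_i/u_j,r)]$. Because $\mathrm{Beta}(\alpha,1)$ is continuous and $u_i,u_j$ are independent, the tie event $\{u_i/u_j=r\}$ has probability zero. The half-weight term in $h$ therefore drops out, leaving $L_u^{i\to j}(r)=\E[u_j\,\ind(u_i/u_j>r)]=\E[u_j\,\ind(u_j<u_i/r)]$. By \cref{prop:beta_pairwise} this equals $b_\alpha L_\alpha(r)$. Substituting $r=s_i/s_j=(p_i/C_i)/(p_j/C_j)$ and factoring out $b_\alpha$ yields exactly \cref{eq:weighted_beta_affine}.

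For strict propriety I will use the second half of \cref{prop:known_cost_rescaling}. This requires showing that $u_i/u_j$ has full support on $(0,\infty)$, which I expect to be the only step needing care. Take any open interval $(a,b)\subset(0,\infty)$ and pick $t\in(a,b)$. Then choose a point $(x_0,y_0)\in(0,1)^2$ with $x_0/y_0=t$; for example $y_0=\tfrac12\min\{1,1/t\}$ and $x_0=ty_0$. The map $(x,y)\mapsto x/y$ is continuous, so some small open box around $(x_0,y_0)$ inside $(0,1)^2$ is mapped into $(a,b)$. The joint density $\alpha^2x^{\alpha-1}y^{\alpha-1}$ is strictly positive on $(0,1)^2$, so this box has positive probability. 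Hence $\Pr(u_i/u_j\in(a,b))>0$.

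It follows that the expected search cost $S$ is strictly proper. Since $S=\frac{\alpha}{\alpha+1}C_i+b_\alpha S^{\mathrm{Beta}}_{\alpha,\mathbf C}$ with $b_\alpha>0$, the additive term does not depend on $\vec p$ after taking expectation over $y\sim\vec\pi$: its contribution is $\sum_k\pi_k\frac{\alpha}{\alpha+1}C_k$, which is a constant in $\vec p$. A positive affine rescaling of the remaining term preserves the unique minimizer $\vec p=\vec\pi$. Therefore $S^{\mathrm{Beta}}_{\alpha,\mathbf C}$ is strictly proper for every finite $\alpha>0$ and every choice of $C_k>0$.
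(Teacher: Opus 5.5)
Your proposal is correct and follows essentially the same route as the paper's proof: apply \cref{prop:known_cost_rescaling}, substitute $\E[u_i]=\alpha/(\alpha+1)$ and $L_u^{i\to j}=b_\alpha L_\alpha$ from \cref{prop:beta_pairwise}, then obtain strict propriety from full support of $u_i/u_j$ and the positive affine relation. You also spell out three points the paper only asserts: that the tie event is null, an explicit full-support argument for $u_i/u_j$, and that the label-dependent term $\frac{\alpha}{\alpha+1}C_i$ becomes a $\vec p$-independent constant under the Bayes risk.
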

  
\begin{proof}
  By \cref{prop:known_cost_rescaling},
  \[
    S(\vec p,y{=}i)
    \;=\;
    C_i\,\E[u_i]
    \;+\;
    \sum_{j\neq i}
    C_j\,L_u^{i\to j}\!\left(\frac{p_i/C_i}{p_j/C_j}\right).
  \]
  Under iid $\mathrm{Beta}(\alpha,1)$ unit costs,
  \cref{prop:beta_pairwise} gives
  \[
    L_u^{i\to j}(r)=b_\alpha\,L_\alpha(r),
  \]
  and $\E[u_i]=\alpha/(\alpha+1)$.
  Substituting these identities yields
  \cref{eq:weighted_beta_affine}.
  
  For strict propriety, note that for every finite $\alpha>0$,
  the ratio $u_i/u_j$ has support $(0,\infty)$.
  Hence \cref{prop:known_cost_rescaling} implies that the expected search cost
  $S(\vec p,y{=}i)$ is strictly proper.
  Finally, \cref{eq:weighted_beta_affine} shows that this expected search cost
  is a positive affine transform of
  $S^{\mathrm{Beta}}_{\alpha,\mathbf C}$, so the two have the same unique
  minimizer in $\vec p$.
  Therefore $S^{\mathrm{Beta}}_{\alpha,\mathbf C}$ is strictly proper.
\end{proof}

\subsubsection{Three anchor regimes: \texorpdfstring{$\alpha=1$}{alpha=1}, \texorpdfstring{$\alpha\downarrow 0$}{alpha->0}, and \texorpdfstring{$\alpha\to\infty$}{alpha->infty}.}

To keep the two objects distinct, recall from
\cref{eq:weighted_beta_score,eq:weighted_beta_affine} that the
\emph{normalized} Beta score is
\[
  S^{\mathrm{Beta}}_{\alpha,\mathbf C}(\vec p,i)
  \;:=\;
  \sum_{j\neq i}
  C_j\,L_\alpha\!\left(\frac{p_i/C_i}{p_j/C_j}\right),
\]
while the corresponding \emph{raw} expected search cost satisfies
\[
  S(\vec p,y{=}i)
  \;=\;
  \frac{\alpha}{\alpha+1}\,C_i
  \;+\;
  b_\alpha\,S^{\mathrm{Beta}}_{\alpha,\mathbf C}(\vec p,i),
\]
where \(b_\alpha\) is defined in \cref{eq:beta_b}. The case \(\alpha=1\)
is an ordinary member of the Beta family, whereas \(\alpha\downarrow 0\)
and \(\alpha\to\infty\) are limiting regimes.

\begin{corollary}[Uniform unit-cost case: \texorpdfstring{$\alpha=1$}{alpha=1}]
\label{cor:beta_one}
For \(\alpha=1\),
\[
  L_1(r)
  \;=\;
  \begin{cases}
    3-2r, & r \le 1,\\[4pt]
    r^{-2}, & r > 1.
  \end{cases}
\]
Hence
\begin{equation}
\label{eq:beta_alpha1_affine}
  S(\vec p,y{=}i)
  \;=\;
  \frac12\,C_i
  \;+\;
  \frac16\,S^{\mathrm{Beta}}_{1,\mathbf C}(\vec p,i).
\end{equation}
In particular, \(S^{\mathrm{Beta}}_{1,\mathbf C}\) is strictly proper.
\end{corollary}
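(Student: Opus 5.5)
This is a direct specialization of \cref{cor:beta_weighted} to $\alpha=1$, so the plan is to substitute and simplify rather than redo any integral.

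First, set $\alpha=1$ in the definition \cref{eq:beta_shape} of $L_\alpha$. On the branch $r\le 1$ the factor $1+\tfrac1\alpha$ equals $2$ and $r^\alpha=r$, giving
\[
1+2(1-r)=3-2r .
\]
On the branch $r>1$ we get $r^{-(\alpha+1)}=r^{-2}$. This reproduces the displayed $L_1$ and matches $L_{\mathrm{pandora}}$ in \cref{def:pandora_regret}. As an independent sanity check, one could evaluate $\E[c_j\ind(c_j<c_i/r)]$ directly for i.i.d.\ uniforms: for $r>1$ it equals $\int_0^1 \tfrac{x^2}{2r^2}\,dx=\tfrac{1}{6r^2}$. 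This confirms both the branch and the constant $\tfrac16$.

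Second, compute the constants in \cref{eq:weighted_beta_affine} at $\alpha=1$. The leading coefficient is $\tfrac{\alpha}{\alpha+1}=\tfrac12$. From \cref{eq:beta_b},
\[
b_1=\frac{1}{(1+1)(2+1)}=\frac16 .
\]
Substituting these values into \cref{eq:weighted_beta_affine} yields \cref{eq:beta_alpha1_affine} verbatim.

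Third, strict propriety is inherited from \cref{cor:beta_weighted}, whose conclusion holds for every finite $\alpha>0$, and in particular for $\alpha=1$. The underlying reason is that $u_i/u_j$ has full support on $(0,\infty)$ for i.i.d.\ $\mathrm{Unif}[0,1]=\mathrm{Beta}(1,1)$ draws. Alternatively, the same conclusion follows because \cref{eq:beta_alpha1_affine} is a positive affine transform (coefficient $\tfrac16>0$) of a strictly proper expected cost. The $C_i$ term does not depend on $\vec p$, so it does not affect the minimizer.

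There is no real obstacle here. The only points needing care are the arithmetic for $b_1$ and checking that the boundary case $r=1$ of the piecewise definition is consistent. It is: both branches give $1$ there, and ties have probability zero because the distribution is continuous.
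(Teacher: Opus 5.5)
Your proposal is correct and follows the paper's proof essentially verbatim: substitute $\alpha=1$ into $L_\alpha$ and into the affine relation of \cref{cor:beta_weighted} to get $b_1=\tfrac16$ and $\alpha/(\alpha+1)=\tfrac12$. Strict propriety then follows because the expected search cost is strictly proper and $S^{\mathrm{Beta}}_{1,\mathbf C}$ differs from it only by a $\vec p$-independent additive term and a positive rescaling; your direct check $\E[c_j\ind(c_j<c_i/r)]=\tfrac{1}{6r^2}$ for $r>1$ is a harmless extra confirmation of the constant.
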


\begin{proof}
Set \(\alpha=1\) in \cref{eq:beta_shape,eq:weighted_beta_affine}. Then
\(b_1=\tfrac16\) and \(\alpha/(\alpha+1)=\tfrac12\), which yields
\cref{eq:beta_alpha1_affine}. For any true label distribution
\(\vec\pi\), the Bayes risk of \(S^{\mathrm{Beta}}_{1,\mathbf C}\) is
obtained from the Bayes risk of the raw expected search cost by
subtracting \(\tfrac12\sum_i \pi_i C_i\) and multiplying by \(6\). Hence
the minimizer in \(\vec p\) is unchanged. Since the raw expected search
cost is strictly proper for every finite \(\alpha>0\),
\(S^{\mathrm{Beta}}_{1,\mathbf C}\) is strictly proper as well.
\end{proof}

\begin{corollary}[Unit-cost specialization: Pandora's Regret]
\label{cor:pandora_regret}
Suppose \(C_k\equiv 1\). Define
\begin{equation}
\label{eq:pandora_from_beta}
  S_{\mathrm{Pandora}}(\vec p,i)
  \;:=\;
  \frac{1}{3(K-1)}
  \sum_{j\neq i}
  L_1\!\left(\frac{p_i}{p_j}\right).
\end{equation}
Then
\begin{equation}
\label{eq:pandora_affine}
  S(\vec p,y{=}i)
  \;=\;
  \frac{1+(K-1)\,S_{\mathrm{Pandora}}(\vec p,i)}{2}.
\end{equation}
In particular, \(S_{\mathrm{Pandora}}\) is strictly proper.
\end{corollary}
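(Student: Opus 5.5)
The plan is to specialize \cref{cor:beta_one} to unit base costs and then rewrite the result in terms of the normalization chosen in \eqref{eq:pandora_from_beta}.

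First, set $C_k\equiv 1$ for all $k$. The cost-adjusted odds $\frac{p_i/C_i}{p_j/C_j}$ reduce to $p_i/p_j$, and every weight $C_j$ equals $1$. Hence
\[
  S^{\mathrm{Beta}}_{1,\mathbf 1}(\vec p,i)=\sum_{j\neq i}L_1\!\left(\frac{p_i}{p_j}\right)=3(K-1)\,S_{\mathrm{Pandora}}(\vec p,i),
\]
where the second equality is just the definition \eqref{eq:pandora_from_beta}.

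Second, substitute this into \eqref{eq:beta_alpha1_affine} with $C_i=1$:
\[
  S(\vec p,y{=}i)=\tfrac12+\tfrac16\cdot 3(K-1)\,S_{\mathrm{Pandora}}(\vec p,i)=\frac{1+(K-1)S_{\mathrm{Pandora}}(\vec p,i)}{2}.
\]
This is exactly \eqref{eq:pandora_affine}. It also agrees with \cref{thm:pandora_regret_proper}, since unit base costs with $\mathrm{Beta}(1,1)$ unit draws are the same as i.i.d.\ $\mathrm{Unif}[0,1]$ costs.

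Third, strict propriety. For a true distribution $\vec\pi$, the Bayes risk of the raw cost is $\tfrac12+\tfrac{K-1}{2}\sum_i\pi_i S_{\mathrm{Pandora}}(\vec p,i)$. This is a positive affine transform of the Bayes risk of $S_{\mathrm{Pandora}}$, and the factor $(K-1)/2$ is positive because $K\ge2$. The raw cost is strictly proper by \cref{cor:beta_weighted}, applied with $\alpha=1$ and $C_k=1$, so the two Bayes risks share the same unique minimizer $\vec p=\vec\pi$. Therefore $S_{\mathrm{Pandora}}$ is strictly proper.

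There is no real obstacle here. The only step that needs care is the constant bookkeeping: $b_1=1/6$, $\E[u_i]=1/2$, and the factor $3(K-1)$ must be checked so that the constant term comes out as $1/2$ and not some other value.
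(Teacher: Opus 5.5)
Your proposal is correct and matches the paper's proof essentially step for step. Like the paper, you set $C_k\equiv 1$ to get $S^{\mathrm{Beta}}_{1,\mathbf 1}=3(K-1)\,S_{\mathrm{Pandora}}$, substitute into the $\alpha=1$ affine identity, and carry strict propriety through the positive affine map. Your explicit check that the factor $(K-1)/2$ is positive because $K\ge 2$ is a small point the paper leaves implicit.
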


\begin{proof}
When \(C_k\equiv 1\),
\[
  S^{\mathrm{Beta}}_{1,\mathbf 1}(\vec p,i)
  =
  \sum_{j\neq i} L_1\!\left(\frac{p_i}{p_j}\right)
  =
  3(K-1)\,S_{\mathrm{Pandora}}(\vec p,i)
\]
by \cref{eq:pandora_from_beta}. Substituting this into
\cref{eq:beta_alpha1_affine} gives \cref{eq:pandora_affine}. As above,
this removes an additive term independent of \(\vec p\) and rescales by a
positive constant, so \(S_{\mathrm{Pandora}}\) is strictly proper.
\end{proof}

For the lower-endpoint limit, define the limiting pairwise loss
\begin{equation}
\label{eq:beta_zero_shape}
  L_0(r)
  \;:=\;
  \lim_{\alpha\downarrow 0} L_\alpha(r)
  \;=\;
  \begin{cases}
    1-\ln r, & r \le 1,\\[4pt]
    r^{-1}, & r > 1,
  \end{cases}
\end{equation}
and the corresponding limiting normalized score
\begin{equation}
\label{eq:beta_zero_score}
  S^{\mathrm{Beta}}_{0,\mathbf C}(\vec p,i)
  \;:=\;
  \sum_{j\neq i}
  C_j\,L_0\!\left(\frac{p_i/C_i}{p_j/C_j}\right).
\end{equation}
This is a limit score, not a finite-\(\alpha\) Beta score.

\begin{corollary}[Lower-endpoint limit: \texorpdfstring{$\alpha\downarrow 0$}{alpha->0}]
\label{cor:beta_zero}
For every \(\vec p\) and class \(i\),
\[
  S^{\mathrm{Beta}}_{\alpha,\mathbf C}(\vec p,i)
  \;\longrightarrow\;
  S^{\mathrm{Beta}}_{0,\mathbf C}(\vec p,i)
  \qquad\text{as }\alpha\downarrow 0.
\]
Moreover, \(S^{\mathrm{Beta}}_{0,\mathbf C}\) is strictly proper.
\end{corollary}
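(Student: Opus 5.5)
The proof has two parts: pointwise convergence of the pairwise loss, and strict propriety of the limiting score.

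\emph{Convergence.} Since $S^{\mathrm{Beta}}_{\alpha,\mathbf C}(\vec p,i)$ is a finite sum of terms $C_j L_\alpha(r_j)$ with fixed $r_j=(p_i/C_i)/(p_j/C_j)>0$, it suffices to show $L_\alpha(r)\to L_0(r)$ for each fixed $r>0$. For $r>1$ we have $r^{-(\alpha+1)}\to r^{-1}$ directly. For $r\le 1$ we write
\[
\bigl(1+\tfrac1\alpha\bigr)(1-r^\alpha)=(\alpha+1)\,\frac{1-e^{\alpha\ln r}}{\alpha},
\]
and the second factor tends to $-\ln r$ by the derivative of $e^{t\ln r}$ at $t=0$. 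Hence $L_\alpha(r)\to 1-\ln r$, and the first claim follows.

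\emph{Strict propriety.} A limit of strictly proper scores is in general only proper, so we do not pass to the limit. Instead we identify $S^{\mathrm{Beta}}_{0,\mathbf C}$ with the pairwise part of an actual search cost and apply \cref{prop:known_cost_rescaling}. The natural candidate is the improper log-uniform unit-cost law with density $w(u)\propto 1/u$ on $(0,1)$; indeed $b_\alpha\sim\alpha$ as $\alpha\downarrow0$, which matches $\mathrm{Beta}(\alpha,1)$ rescaled by $1/\alpha$. Computing $\int_0^1\!\int_0^1 u_j\,\ind(u_j<u_i/r)\,\frac{du_i\,du_j}{u_i u_j}$ gives $1-\ln r$ for $r\le1$ and $r^{-1}$ for $r>1$, which is exactly $L_0$. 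The theorem assumes a probability measure, however, so rather than rely on this improper prior I would verify propriety directly via the pairwise Bayes-risk argument of \cref{thm:properness}. That argument never uses normalization of the measure; it only needs each $\Phi_{ij}$ to be finite, and these integrals are finite, as the computation shows.

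Concretely, for the pair $\{i,j\}$ with true $\vec\pi$, the pairwise risk is
\[
\Phi_{ij}(r)=\pi_i C_j L_0(r)+\pi_j C_i L_0(1/r),
\]
where $r$ is the cost-adjusted odds. The second term is symmetric in the roles of $i$ and $j$, and $L_0(r)$ for $r\le1$ is the mass the measure assigns to distractor-first orderings. I would present the argument as a direct calculus check, which avoids measure-theoretic subtleties. Using $L_0'(r)=-1/r$ for $r<1$ and $-r^{-2}$ for $r>1$, differentiating gives, for $r<1$,
\[
\Phi_{ij}'(r)=-\pi_i C_j/r+\pi_j C_i/r^2,
\]
and a mirror expression for $r>1$. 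In both regimes this vanishes only at $r^\star=(\pi_i/C_i)/(\pi_j/C_j)$, with the sign changing from negative to positive. $\Phi_{ij}$ is also continuous and $C^1$ at $r=1$, where both one-sided derivatives equal $-\pi_iC_j+\pi_jC_i$. So $\Phi_{ij}$ is strictly decreasing then strictly increasing, with unique minimizer $r^\star$.

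Summing over pairs, exactly as in \cref{thm:properness}, the Bayes risk is uniquely minimized when all cost-adjusted ratios match the true ones. On the interior of the simplex this forces $\vec p=\vec\pi$. The main obstacle is this last step, making the strict-propriety argument rigorous without a genuine probability measure. The explicit derivative sign analysis of $\Phi_{ij}$ handles it, including the kink-free matching at $r=1$.
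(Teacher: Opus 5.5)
Your proof is correct in substance. The convergence part matches the paper, which simply writes $r^\alpha=1+\alpha\ln r+O(\alpha^2)$. For strict propriety you take a different and more explicit route. The paper settles it in one line: $L_0$ is continuous and strictly decreasing, so the pairwise Bayes-risk argument of \cref{thm:properness} ``applies directly.'' Monotonicity alone does not license that argument. For example, $L(r)=1/r$ is continuous and strictly decreasing, yet $\pi_iC_jL(r)+\pi_jC_iL(1/r)$ is minimized at $\sqrt{r^\star}$, not at $r^\star$. What actually makes the threshold argument go through is that $L_0$ is a mixture of threshold costs. That is exactly your computation $\iint u_j\ind(u_j<u_i/r)\,\frac{du_i\,du_j}{u_iu_j}=L_0(r)$ under the $\sigma$-finite measure $du/u$ on $(0,1)$. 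This must be applied pairwise: the $K$-fold product of that measure makes the full expected cost infinite, so $S^{\mathrm{Beta}}_{0,\mathbf C}$ is not literally the pairwise part of a finite search cost. Your direct sign analysis of $\Phi_{ij}$ is self-contained and sidesteps improper measures entirely. The paper's route is shorter and keeps the search-cost semantics, but it needs your identification of the generating measure to be complete.

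There is one computational slip to fix. For $r<1$ you have $1/r>1$, so $L_0(1/r)=r$ and $\Phi_{ij}'(r)=-\pi_iC_j/r+\pi_jC_i$, not $-\pi_iC_j/r+\pi_jC_i/r^2$. As you wrote it, the derivative vanishes at $1/r^\star$, which contradicts your own conclusion. For $r>1$, $L_0(1/r)=1+\ln r$ gives $\Phi_{ij}'(r)=-\pi_iC_j/r^2+\pi_jC_i/r$. In both regimes $\Phi_{ij}'(r)$ has the sign of $\pi_jC_i\,r-\pi_iC_j$, i.e.\ of $r-r^\star$. The one-sided derivatives at $r=1$ agree, as you say, so the unique-minimizer conclusion stands. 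Note also that $b_\alpha=\alpha^2/((\alpha+1)(2\alpha+1))\sim\alpha^2$, not $\alpha$, since each of the two independent coordinates contributes a factor of $\alpha$. This affects only your heuristic, not the proof, because you compute the integral directly.
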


\begin{proof}
  For \(r>1\), \cref{eq:beta_shape} gives \(L_\alpha(r)=r^{-(\alpha+1)}\to r^{-1}\).
  For \(r\le 1\), the expansion \(r^\alpha=1+\alpha\ln r+O(\alpha^2)\) gives \(L_\alpha(r)\to 1-\ln r\).
  This proves \cref{eq:beta_zero_shape}, and termwise convergence gives \(S^{\mathrm{Beta}}_{\alpha,\mathbf C}(\vec p,i) \to S^{\mathrm{Beta}}_{0,\mathbf C}(\vec p,i)\).
  
  For strict propriety, note that \(L_0\) is continuous and strictly decreasing on \((0,\infty)\), so the pairwise Bayes-risk argument of \cref{thm:properness} applies directly.
\end{proof}

\begin{corollary}[Deterministic rank limit: \texorpdfstring{$\alpha\to\infty$}{alpha->infty}]
\label{cor:beta_infty}
Write
\[
  s_k:=\frac{p_k}{C_k},
\]
and recall \(h(a,b):=\ind(a>b)+\tfrac12\ind(a=b)\). Then
\[
  L_\alpha(r)\;\longrightarrow\;2\,h(1,r)
  \;=\;
  \begin{cases}
    2, & r<1,\\[2pt]
    1, & r=1,\\[2pt]
    0, & r>1.
  \end{cases}
\]
Therefore the normalized score satisfies
\begin{equation}
\label{eq:beta_infty_normalized}
  S^{\mathrm{Beta}}_{\alpha,\mathbf C}(\vec p,i)
  \;\longrightarrow\;
  2\sum_{j\neq i} C_j\,h(s_j,s_i).
\end{equation}
And since \(b_\alpha\to \tfrac12\) in \cref{eq:weighted_beta_affine}, the
raw expected search cost satisfies
\begin{equation}
\label{eq:beta_infty_raw}
  S(\vec p,y{=}i)
  \;\longrightarrow\;
  C_i+\sum_{j\neq i} C_j\,h(s_j,s_i).
\end{equation}
Thus the \(\alpha\to\infty\) limit retains only the weighted rank
ordering of the scores \(s_k\), and the limiting normalized score is not
strictly proper.
\end{corollary}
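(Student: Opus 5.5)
The plan is to prove the three claims in order: the pointwise limit of $L_\alpha$, the limits of the normalized and raw scores, and the failure of strict propriety. Each limit is taken directly from the closed forms in \cref{eq:beta_shape}, \cref{eq:beta_b} and \cref{cor:beta_weighted}.

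\textbf{Limit of $L_\alpha$.} Split into the three cases $r>1$, $r=1$, $r<1$.
\begin{itemize}
\item For $r>1$ we have $L_\alpha(r)=r^{-(\alpha+1)}\to 0$, since $r^{-1}<1$.
\item For $r=1$ the left branch gives $1+(1+1/\alpha)\cdot 0=1$ for every $\alpha$.
\item For $r<1$, $r^\alpha\to0$ and $1+1/\alpha\to1$, so $L_\alpha(r)\to 1+1\cdot 1=2$.
\end{itemize}
Comparing with $h(1,r)=\ind(1>r)+\tfrac12\ind(1=r)$, this is exactly $2h(1,r)$.

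\textbf{Limit of the normalized score.} Apply the pointwise limit to $r=s_i/s_j$, noting that $h(1,s_i/s_j)=h(s_j,s_i)$ because $s_j>0$. The sum over $j\neq i$ is finite and the weights $C_j$ are fixed, so termwise convergence gives \cref{eq:beta_infty_normalized}.

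\textbf{Limit of the raw expected cost.} Compute
\[
b_\alpha=\frac{1}{(1+1/\alpha)(2+1/\alpha)}\to\tfrac12
\qquad\text{and}\qquad
\frac{\alpha}{\alpha+1}\to1 .
\]
Substituting these into \cref{eq:weighted_beta_affine} gives $C_i+\tfrac12\cdot 2\sum_{j\neq i} C_j h(s_j,s_i)$, which is \cref{eq:beta_infty_raw}. As a sanity check, $\mathrm{Beta}(\alpha,1)$ concentrates at $1$, so costs become deterministic $C_k$ and the search just orders by $s_k$ with random tie-breaking. The limit therefore agrees with the direct computation from \cref{prop:known_cost_rescaling} with the ratio $u_i/u_j=1$ almost surely.

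\textbf{Failure of strict propriety.} The limiting score depends on $\vec p$ only through the ordering of the $s_k$. It is therefore invariant under any perturbation of $\vec p$ that preserves that strict ordering. Take any true $\vec\pi\in\mathrm{int}(\Delta^{K-1})$ whose values $\pi_k/C_k$ are pairwise distinct. A sufficiently small perturbation $\vec p\neq\vec\pi$ keeps every strict inequality $s_j<s_i$ or $s_j>s_i$ unchanged, so $S(\vec p,i)=S(\vec\pi,i)$ for every $i$ and the Bayes risks coincide. The minimizer is then not unique, so the limiting score is not strictly proper.

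The main obstacle is the $r=1$ case, where the limit must be checked to match the $\tfrac12$ tie convention in $h$. It does, because $L_\alpha(1)=1$ holds for every $\alpha$.
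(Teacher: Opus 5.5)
Your proposal is correct and follows the paper's proof essentially step for step: the three-case pointwise limit of $L_\alpha$ (with $L_\alpha(1)=1$ matching the $\tfrac12$ tie convention), termwise substitution $r=s_i/s_j$, the limits $b_\alpha\to\tfrac12$ and $\alpha/(\alpha+1)\to 1$ in the affine relation, and failure of strict propriety because the limit depends on $\vec p$ only through the ranking of the $s_k$, which is locally constant near any $\vec\pi$ with distinct $\pi_k/C_k$. Your explicit check that $h(1,s_i/s_j)=h(s_j,s_i)$ and your concrete small-perturbation construction spell out details the paper leaves implicit.
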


\begin{proof}
  If \(r<1\), then \(r^\alpha\to 0\) in \cref{eq:beta_shape}, so \(L_\alpha(r)\to 2\).
  If \(r>1\), then \(r^{-(\alpha+1)}\to 0\).
  And \(L_\alpha(1)=1\) for every \(\alpha\).
  Therefore, \(L_\alpha(r)\to 2\,h(1,r)\).
  Substituting \(r=s_i/s_j\) proves \cref{eq:beta_infty_normalized},
  and \cref{eq:beta_infty_raw} follows from \cref{eq:weighted_beta_affine} using \(\alpha/(\alpha+1)\to 1\) and \(b_\alpha\to\tfrac12\).
  The limiting score depends on \(\vec p\) only through the ranking of \(s_1,\dots,s_K\), so its Bayes risk is constant on open regions of the simplex where that ranking is unchanged.
  Therefore the minimizer is not unique and strict propriety fails.
\end{proof}


\section{Log Loss as Search Cost}
\label{apdx:logloss_search}

We prove that interpreting log loss as a search problem reveals a strict mathematical "cost amnesia," in which the metric perfectly erases all economic penalties for early wrong turns.
In a sequential search model (\cref{thm:cost_amnesia}), the expected overtreatment costs telescope multiplicatively to a constant, regardless of the evaluation order.
In a parallel decision model (\cref{thm:cost_amnesia_parallel}), they collapse additively via the probability simplex.
Under both architectures, the total expected cost reduces strictly to $-\ln p_m$ plus a universal baseline.
This demonstrates that blindness to search efficiency is not a quirk of a specific decision pipeline, but an intrinsic structural pathology of the logarithm interacting with the probability simplex.

\subsection{Cost Amnesia in Fixed-Order Search}
\label{apdx:cost_amnesia_proof}

We model multiclass evaluation as a sequence of binary rule-in decisions in a
fixed order $\sigma$.
Let
\[
S_{t-1}=\sum_{s=t}^K p_{\sigma(s)},
\qquad
h_t=\frac{p_{\sigma(t)}}{S_{t-1}}.
\]
At step $t$, the current class has conditional probability $h_t$ among the
remaining mass.

To connect with literal search costs, suppose acting on the current class costs
$c$, while deferring to the true class costs $1$.
If the policy acts whenever $h\ge c$, the realized step loss is
\[
  \ell_c(h)
  = c\,\ind(h\ge c)
  + \ind(\sigma(t)=i,\; h<c).
\]
The ex post optimal action incurs loss $c\,\ind(\sigma(t)=i)$, so the stepwise regret is
\[
  \ell_c(h)-c\,\ind(\sigma(t)=i)
  = c\,\ind(\sigma(t)\neq i,\; h\ge c)
  + (1-c)\,\ind(\sigma(t)=i,\; h<c).
\]
We integrate this regret against the Haldane measure
\[
d\mu(c)=\frac{dc}{c(1-c)}.
\]

\begin{theorem}[Cost Amnesia for Fixed-Order Search]
\label{thm:cost_amnesia}
Let $\vec{p}\in\Delta^{K-1}$ and let $i$ be the true class, appearing at position $m$ in the fixed search order $\sigma$. Then the
total integrated regret of the sequential search is
\[
\mathcal R_\sigma(\vec{p},\,i)=-\ln p_i.
\]
\end{theorem}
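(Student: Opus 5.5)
We split the total regret over the steps $t=1,\dots,m$ of the fixed order, since the search stops at the true class. We then integrate each step's regret against the Haldane measure $d\mu(c)=dc/(c(1-c))$ and show that the per-step integrals are logarithms that telescope.

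\textbf{Steps before the true class.} For $t<m$, the current class is a distractor, so the stepwise regret is $c\,\ind(h_t\ge c)$. Its integral is
\[
\int_0^{h_t}\frac{c}{c(1-c)}\,dc=\int_0^{h_t}\frac{dc}{1-c}=-\ln(1-h_t).
\]
Since $1-h_t=S_t/S_{t-1}$, this step contributes $\ln S_{t-1}-\ln S_t$.

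\textbf{The step at the true class.} At $t=m$, with $\sigma(m)=i$, the regret is $(1-c)\,\ind(h_m<c)$. Its integral is
\[
\int_{h_m}^1\frac{dc}{c}=-\ln h_m=\ln S_{m-1}-\ln p_i .
\]

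\textbf{Summing.} Adding the contributions for $t=1,\dots,m$ telescopes to
\[
\ln S_0-\ln S_{m-1}+\ln S_{m-1}-\ln p_i=\ln S_0-\ln p_i .
\]
Because $S_0=\sum_k p_k=1$, this equals $-\ln p_i$, which is the claim.

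\textbf{Points to check.} The main care point is convergence. The Haldane measure is non-integrable at both endpoints, but each regret term vanishes at exactly the endpoint where the measure blows up. The factor $c$ kills the singularity at $0$ in the distractor steps, and the factor $(1-c)$ kills the singularity at $1$ in the true-class step. Both integrals are therefore finite for $h_t\in(0,1)$.

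Three degenerate cases also need handling:
\begin{itemize}
\item If $h_m=1$, the true-class integral is $0$ and the formula still holds.
\item If $p_i=0$, both sides equal $+\infty$.
\item If some distractor has $h_t=0$, its step contributes $0$, which matches $-\ln 1=0$.
\end{itemize}
The convention $h\ge c$ versus $h>c$ changes the integrands only on a $\mu$-null set, so it does not affect the result.
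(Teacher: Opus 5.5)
Your proposal is correct and is essentially the paper's proof: you compute the distractor-step integral $-\ln(1-h_t)=-\ln(S_t/S_{t-1})$ and the true-step integral $-\ln h_m=-\ln(p_i/S_{m-1})$ against the Haldane measure, then telescope to $-\ln p_i$ using $S_0=1$. Your remarks on convergence at the poles and on the degenerate cases ($h_m=1$, $p_i=0$, $h_t=0$) are sound additions that the paper leaves implicit.
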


\begin{proof}
At each false step $t<m$ (where $\sigma(t)\neq i$), only the overtreatment term contributes:
\[
J_t^{\text{false}}
=
\int_0^{h_t} c\,\frac{dc}{c(1-c)}
=
-\ln(1-h_t)
=
-\ln\frac{S_t}{S_{t-1}}.
\]
At the true step $t=m$ (where $\sigma(t)=i$), only the undertreatment term contributes:
\[
J_m^{\text{true}}
=
\int_{h_m}^1 (1-c)\,\frac{dc}{c(1-c)}
=
-\ln h_m
=
-\ln\frac{p_i}{S_{m-1}}.
\]
Therefore
\[
\mathcal R_\sigma(\vec{p},\,i)
=
J_m^{\text{true}}+\sum_{t=1}^{m-1}J_t^{\text{false}}
=
-\ln\frac{p_i}{S_{m-1}}
-\sum_{t=1}^{m-1}\ln\frac{S_t}{S_{t-1}}
=
-\ln p_i. \qedhere
\]
\end{proof}

\begin{remark}[Raw cost formulation]
If one integrates the raw search cost rather than regret in the literal model
with action cost $c$ and miss cost $1$, the positive-class term requires
truncation near $c=1$.
After subtracting the resulting order-independent baseline $C(\epsilon)=\ln(1{-}\epsilon)-\ln\epsilon$ and letting
$\epsilon\downarrow 0$, one recovers the same limit $-\ln p_i$.
We state the theorem in regret form because it preserves the $(c,1)$ search
semantics while avoiding truncation.
\end{remark}

\begin{remark}[The Haldane Knife-Edge]
\label{rmk:haldane_critical}
Consider the family $w(c) = \alpha/c + \beta/(1{-}c)$. For a false-class step, expected overtreatment cost evaluates to $(\alpha - \beta)\,h_t - \beta\ln(1 - h_t)$. The logarithmic term telescopes cleanly via the chain rule for any $\beta$. The linear residual $(\alpha - \beta)\,h_t$, however, does not. If $\alpha > \beta$ (prior mass skewed toward cheap tests), the residual is positive; the metric implicitly rewards testing \emph{unlikely} classes first. If $\alpha < \beta$, it is negative, rewarding a greedy strategy of testing likely classes first. The Haldane prior ($\alpha = \beta$) is the unique knife-edge where the residual vanishes, forcing the metric to be completely blind to search economics.
\end{remark}

\begin{remark}[Why this is not just Shannon coding]
  \label{rmk:shannon_caveat}
  The identity \[
  -\ln p_i = -\ln h_m - \sum_{t=1}^{m-1}\ln(1-h_t)
  \] is a chain-rule decomposition of the probability assigned to the true class.
  This resembles the pathwise decomposition of code length in a binary decision tree, but in fact in source coding, the questioner designs each query to balance probabilities and maximize information per step, and the step costs are constant.
  In diagnostic search, however, the available tests are fixed and their costs are heterogeneous.
  The core of our result is to show that under the Haldane-weighted threshold-regret representation with fixed ordering and a constrained action set, the resulting score collapses to the same logarithmic form.
\end{remark}

\subsection{Removing Sequentiality: The Parallel Case}
\label{apdx:parallel_proof}

\begin{theorem}[Cost Amnesia---Parallel]
\label{thm:cost_amnesia_parallel}
Under $K$ simultaneous, independent binary decisions evaluated with the scale-invariant measure $dc/c$ on $(0, 1]$ and a fixed miss cost of~$1$, the total integrated cost is:
\begin{equation}
  \mathcal{L}_\parallel = 1 - \ln p_i
\end{equation}
where $i$ is the true class.
\end{theorem}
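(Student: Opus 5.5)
The plan is to compute the cost class by class, using the decision model of \cref{thm:logloss_structure}. For each class $j$ and threshold cost $c\in(0,1]$, the decision maker treats when $p_j\ge c$ and pays $c$. Otherwise it pays the miss cost $1$, but only if $j$ is the true class. The realized cost for class $j$ at cost level $c$ is therefore
\[
  \ell_j(c)=c\,\ind(p_j\ge c)+\ind(j=i)\,\ind(p_j<c).
\]
The total integrated cost is $\mathcal L_\parallel=\sum_{j=1}^K\int_0^1 \ell_j(c)\,\frac{dc}{c}$. Since the decisions are independent and additive, this sum can be split into an overtreatment part, which sums over all classes, and an undertreatment part, which involves only the true class $i$.

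\textbf{Overtreatment.} For each $j$, the overtreatment integral is $\int_0^{p_j} c\cdot\frac{dc}{c}=p_j$. The factor $1/c$ in the measure exactly cancels the linear action cost, which makes this integral finite and linear in $p_j$. Summing over $j$ gives $\sum_j p_j=1$ by the simplex constraint. This is the additive collapse mentioned at the start of the appendix, and it explains why the treatment costs do not depend on $\vec p$.

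\textbf{Undertreatment.} Only the true class contributes here: $\int_{p_i}^1 1\cdot\frac{dc}{c}=-\ln p_i$. Adding the two parts gives $\mathcal L_\parallel=1-\ln p_i$. This agrees with the expected cost stated in \cref{thm:logloss_structure}, whose proof this result supplies.

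The main point needing care is normalization and boundary conventions, not the calculus. First, the measure $dc/c$ is not finite, so it has to be used as an unnormalized weight, not a probability density. The phrase ``density $\propto 1/c$'' in \cref{thm:logloss_structure} should be read with constant $1$; otherwise the result is $\kappa(1-\ln p_i)$ for some constant $\kappa$. Second, the integrals converge only because of the factor $c$ near $0$ and because $p_i>0$. If $p_i=0$, the undertreatment term diverges, which is the usual log loss behaviour. Ties $p_j=c$ form a null set and do not affect the result.
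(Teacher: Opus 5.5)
Your proposal is correct and matches the paper's proof. The paper computes the true-class risk as $p_i-\ln p_i$ and each false-class risk as $p_k$, then sums using $\sum_k p_k=1$; your overtreatment/undertreatment split is the same computation grouped differently, and your note that $dc/c$ must be used as an unnormalized weight with constant $1$ is a fair clarification of the ``$\propto 1/c$'' phrasing in \cref{thm:logloss_structure}.
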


\begin{proof}
Let the optimal policy treat class~$k$ if and only if $c_k < p_k$. No truncation is required: although $dc/c$ has a pole at zero, the linear action cost~$c$ absorbs it at every step.

\medskip\noindent\textbf{True class ($k = i$).}
The risk combines the cost of correct treatment with the cost of a miss:
\begin{equation}
  \mathcal{R}_i
  = \int_0^{p_i} c \left(\frac{1}{c}\right) dc
    + \int_{p_i}^1 1 \left(\frac{1}{c}\right) dc
  = p_i - \ln p_i.
\end{equation}

\medskip\noindent\textbf{False class ($k \neq i$).}
Correct rejection is free; only overtreatment contributes:
\begin{equation}
  \mathcal{R}_k
  = \int_0^{p_k} c \left(\frac{1}{c}\right) dc
  = p_k.
\end{equation}

\medskip\noindent\textbf{Total cost.}
Summing the independent decisions yields:
\begin{equation}
  \mathcal{L}_\parallel
  = \mathcal{R}_i + \sum_{k \neq i} \mathcal{R}_k
  = -\ln p_i
    + \underbrace{\sum_{k=1}^{K} p_k}_{=\,1}
  = 1 - \ln p_i. \qedhere
  \label{eq:total_parallel}
\end{equation}
\end{proof}

\begin{remark}
We have already seen this $dc/c$ measure as the $\alpha\to0$ limit of the Pandora family we define in \cref{sec:beta_family}. So it can be lifted to the sequential search setting with optimal ordering for each cost, but then we recover the Pandora score of \cref{apdx:beta_family}, not the log loss.
\end{remark}

\section{Top-1 Measures}

\begin{table}[h]
  \centering
  \small
  \begin{tabular}{p{0.10\textwidth}cp{0.35\textwidth}cp{0.35\textwidth}}
    \toprule
    Desideratum & \multicolumn{2}{c}{Accuracy} & \multicolumn{2}{c}{Macro F1} \\
    \cmidrule(lr){2-3} \cmidrule(lr){4-5}
    & & Assessment & & Assessment \\
    \midrule
    Decision alignment & $\times$
      & Minimizes $1{-}q_{\hat{k}}$ via $\arg\max$; ignores external costs.
      & $\times$
      & Non-decomposable \citep{narasimhan15}: thresholds driven by aggregate confusion counts ($\Delta\text{F1}_k^{\mathrm{TP}}$, $|\Delta\text{F1}_k^{\mathrm{FP}}|$), not clinical costs.
      \\[4pt]
    Within-instance order\linebreak sensitivity & $\times$
      & Only the top class matters; remaining $K{-}1$ ranking ignored.
      & $\times$
      & Also driven by $\arg\max$; remaining $K{-}1$ ranking ignored.
      \\[4pt]
    Strict\linebreak propriety & $\times$
      & Proper, but not strictly proper. Any $\vec{p}$ preserving $\arg\max$ is optimal; minimizer not unique.
      & $\times$
      & Optimal report depends on population-level counts, not solely on $\vec{q}$ \citep{proper07}.
      \\
    \bottomrule
  \end{tabular}
  \caption{Top-1 measures evaluated against the desiderata of \cref{sec:desiderata}.
    Both fail all three, but for different reasons:
    accuracy is cost-blind and flat over the simplex;
    Macro~F1 additionally violates instance-level independence
    by coupling predictions through endogenous, classifier-dependent thresholds.}
  \label{tab:top1_desiderata}
\end{table}

\subsection{Accuracy}
\label{apdx:accuracy_proofs}

Accuracy assigns the score $A(\vec{p}, i) = \ind(\arg\max_k p_k = i)$,
i.e.\ zero-one loss on the hard prediction (ties are measure zero, but typically broken by lowest index).
It is a \emph{decision metric}, not a scoring rule:
it evaluates hard classifications $\hat{y} \in \{e_1,\dots,e_K\}$,
not probability vectors $\vec{p} \in \Delta^{K-1}$.
We include it in the desiderata comparison because accuracy is routinely
used to rank probabilistic models,
and this ranking behavior is what our framework interrogates.

\subsection{The Structural Misalignment of Macro F1}
\label{apdx:f1_proofs}

F1 is a well-known information retrieval metric, extensively critiqued for ignoring true negatives and failing to represent any quantity in the real world \cite{christen24}.  We can write this expression for any class $k$.
$$\text{F1}_k = \frac{2\,\text{TP}_k}{2\,\text{TP}_k + \text{FP}_k + \text{FN}_k}$$
There are various extensions to multiclass and multilabel settings, but by far the most popular is Macro F1, the unweighted average of per-class F1 scores:
$$\text{F1}_{\text{macro}} = \frac{1}{K}\sum_{k=1}^{K}\text{F1}_k$$
We argue that it is structurally misaligned with sequential search
because it aggregates in the wrong direction, induces decision
costs with no clinical basis, and violates patient-level independence.

\subsubsection{The Aggregation Transposition}
\label{apdx:f1_transposition}

Sequential diagnostic search is \emph{patient-pivoted}:
for a single patient, the model must rank $K$ candidate diagnoses.
Macro~F1 inverts this orientation.
It aggregates \emph{per class} across patients
(\emph{category-pivoted} or \emph{label-based} evaluation,
in the terminology of \citet{sebastiani02} and \citet{tsoumakas10}).
The sequence that matters to Macro~F1 is the sequence in which examples contribute to classwise precision and recall, not the sequence in which alternative classes are considered within a single instance.
Because Macro~F1 aggregates horizontally across the population
whilst search operates vertically within a patient,
the metric is structurally blind to within-patient ranking.
It is in principle possible to use Instance F1, but this is rarely done, and \cite{resin23} shows that it is not proper either.

\subsubsection{F1 as Asymmetric Search with Endogenous Costs}
\label{apdx:f1_decision}

Macro~F1 is non-decomposable in the sense of \citet{narasimhan15}:
clinical decisions are mathematically coupled across the evaluation set.
Building on their marginal analysis,
we can characterize the greedy Macro~F1-optimal prediction
as a cost-sensitive decision rule.
The per-class rewards and penalties for true positives and false positives
are determined entirely by the classifier's own aggregate confusion
counts, not by any exogenous or clinically meaningful quantity.

\begin{tcolorbox}
  \begin{proposition}[Macro F1 as a Cost-Sensitive Decision Rule]
    \label{prop:f1_decision}
    In the single-label multiclass setting, given a true conditional class distribution~$\vec{p}$, the greedy Macro~F1-optimal prediction simplifies to the decoupled $\mathcal{O}(K)$ decision rule:
    \begin{equation}
    \label{eq:f1_argmax}
      \hat{k}
      \;=\;
      \arg\max_k\;\Bigl[\,
        p_k\,\bigl(\Delta\text{F1}_k^{\mathrm{TP}}
          + 2\,\bigl|\Delta\text{F1}_k^{\mathrm{FP}}\bigr|\bigr)
        - \bigl|\Delta\text{F1}_k^{\mathrm{FP}}\bigr|
      \,\Bigr],
    \end{equation}
    where $\Delta\text{F1}_k^{\mathrm{TP}}$ and $\Delta\text{F1}_k^{\mathrm{FP}}$ denote the discrete marginal changes to the $\mathrm{F1}$ score of class~$k$ from incrementing its true positives and false positives, respectively.
    \end{proposition}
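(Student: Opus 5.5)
Following \citet{narasimhan15}, I will treat the greedy Macro~F1-optimal prediction for a single marginal patient as a one-step look-ahead. All other patients' predictions, and hence every class's confusion counts $(\mathrm{TP}_k,\mathrm{FP}_k,\mathrm{FN}_k)$, are held fixed. The new instance is assigned to one predicted class $\hat k$, and we choose $\hat k$ to maximize the expected change in $\frac1K\sum_k \mathrm{F1}_k$. The expectation is over the true label $y\sim\vec p$. The constant factor $1/K$ plays no role in the $\arg\max$, so I will drop it.

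The first step enumerates the effect of predicting $k$ when the truth is $y$. If $y=k$, then class $k$ gains a true positive and no other class changes, so the increment is $\Delta\mathrm{F1}_k^{\mathrm{TP}}$. If $y\neq k$, then class $k$ gains a false positive, contributing $\Delta\mathrm{F1}_k^{\mathrm{FP}}$, which is negative. Class $y$ also gains a false negative, with increment $\Delta\mathrm{F1}_y^{\mathrm{FN}}$.

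The second step takes expectations:
\[
  G(k)=p_k\,\Delta\mathrm{F1}_k^{\mathrm{TP}}+(1-p_k)\,\Delta\mathrm{F1}_k^{\mathrm{FP}}+\sum_{y\neq k}p_y\,\Delta\mathrm{F1}_y^{\mathrm{FN}}.
\]
I then rewrite the false-negative sum as $\sum_{y}p_y\Delta\mathrm{F1}_y^{\mathrm{FN}}-p_k\Delta\mathrm{F1}_k^{\mathrm{FN}}$. The full sum over $y$ does not depend on $k$ and drops from the $\arg\max$, which is what makes the rule decouple into an $\mathcal O(K)$ comparison.

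The main obstacle is turning the leftover $-p_k\Delta\mathrm{F1}_k^{\mathrm{FN}}$ into the stated coefficient $2|\Delta\mathrm{F1}_k^{\mathrm{FP}}|$. Since $\mathrm{F1}_k=2\mathrm{TP}_k/(2\mathrm{TP}_k+\mathrm{FP}_k+\mathrm{FN}_k)$ depends on $\mathrm{FP}_k$ and $\mathrm{FN}_k$ only through their sum, incrementing either one gives exactly the same change. Hence $\Delta\mathrm{F1}_k^{\mathrm{FN}}=\Delta\mathrm{F1}_k^{\mathrm{FP}}=-|\Delta\mathrm{F1}_k^{\mathrm{FP}}|$. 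I will verify this identity explicitly and check the degenerate case $2\mathrm{TP}_k+\mathrm{FP}_k+\mathrm{FN}_k=0$ separately.

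Substituting this identity, the $k$-dependent part of $G(k)$ becomes
\[
  p_k\Delta\mathrm{F1}_k^{\mathrm{TP}}-(1-p_k)|\Delta\mathrm{F1}_k^{\mathrm{FP}}|+p_k|\Delta\mathrm{F1}_k^{\mathrm{FP}}|=p_k\bigl(\Delta\mathrm{F1}_k^{\mathrm{TP}}+2|\Delta\mathrm{F1}_k^{\mathrm{FP}}|\bigr)-|\Delta\mathrm{F1}_k^{\mathrm{FP}}|,
\]
which gives \cref{eq:f1_argmax}. Finally I will remark that the coefficients are functions of the classifier's own aggregate counts, which supports the interpretation of endogenous costs.
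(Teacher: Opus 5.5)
Your proposal is correct and follows essentially the same route as the paper. Both arguments enumerate the outcome of predicting $k$ (a TP with probability $p_k$, otherwise an FP on $k$ plus an FN on $y$), take the expectation over $y\sim\vec p$, drop the $k$-independent sum $\sum_j p_j\,\Delta\mathrm{F1}_j^{\mathrm{FN}}$, and use $\Delta\mathrm{F1}_k^{\mathrm{FN}}=\Delta\mathrm{F1}_k^{\mathrm{FP}}$ to collect the coefficient $2|\Delta\mathrm{F1}_k^{\mathrm{FP}}|$; the paper reads that identity off the explicit formula $2\mathrm{TP}_k/(D_k(D_k+1))$, while you get it from $\mathrm{F1}_k$ depending on $\mathrm{FP}_k,\mathrm{FN}_k$ only through their sum.
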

    
    \begin{proof}
    Let $D_k = 2\,\mathrm{TP}_k + \mathrm{FP}_k + \mathrm{FN}_k$ denote the denominator of $\mathrm{F1}_k$.
    Following \citet{narasimhan15} but replacing continuous gradients with exact discrete marginals, the changes to $\mathrm{F1}_k$ from incrementing a single confusion-matrix entry are
    \begin{align}
      \Delta\text{F1}_k^{\mathrm{TP}}
      &= \frac{2(\mathrm{FP}_k + \mathrm{FN}_k)}{D_k\,(D_k + 2)},
      \label{eq:f1_tp_reward}
      \\[4pt]
      \bigl|\Delta\text{F1}_k^{\mathrm{FP}}\bigr|
      \;=\; \bigl|\Delta\text{F1}_k^{\mathrm{FN}}\bigr|
      &= \frac{2\,\mathrm{TP}_k}{D_k\,(D_k + 1)}.
      \label{eq:f1_fp_cost}
    \end{align}
    
    Predicting class~$k$ when the true class is~$j \sim \vec{p}$ yields a true positive if $j=k$ (probability $p_k$). Otherwise, if $j \neq k$ (probability $p_j$), it simultaneously forces a false positive on~$k$ and a false negative on~$j$. Summing over these mutually exclusive outcomes, the expected marginal change in $\text{F1}_{\mathrm{macro}}$ is
    \begin{align}
      \E\bigl[\Delta\text{F1}^{\mathrm{PP}}_k\bigr]
      &= p_k\,\Delta\text{F1}_k^{\mathrm{TP}}
      \;-\; \sum_{j \neq k} p_j \Bigl( \bigl|\Delta\text{F1}_k^{\mathrm{FP}}\bigr| + \bigl|\Delta\text{F1}_j^{\mathrm{FN}}\bigr| \Bigr) \notag \\[4pt]
      &= p_k\Bigl(
        \Delta\text{F1}_k^{\mathrm{TP}}
        \;+\; 
        \bigl|\Delta\text{F1}_k^{\mathrm{FN}}\bigr|
      \Bigr)
      \;-\; (1 - p_k)\,\bigl|\Delta\text{F1}_k^{\mathrm{FP}}\bigr|
      \;-\; \sum_j p_j\,\bigl|\Delta\text{F1}_j^{\mathrm{FN}}\bigr|.
      \label{eq:f1_marginal}
    \end{align}
    
    Because the sum $\sum_j p_j\,\bigl|\Delta\text{F1}_j^{\mathrm{FN}}\bigr|$ is a global constant independent of the predicted class~$k$, and because $\bigl|\Delta\text{F1}_k^{\mathrm{FN}}\bigr| = \bigl|\Delta\text{F1}_k^{\mathrm{FP}}\bigr|$ from~\eqref{eq:f1_fp_cost}, maximizing $\E\bigl[\Delta\text{F1}^{\mathrm{PP}}_k\bigr]$ over~$k$ directly reduces to the rule in~\eqref{eq:f1_argmax}.
    \end{proof}
\end{tcolorbox}

In the limit of large $D_k$, this becomes
$$\hat{k} = \arg\max_k\;\frac{1}{D_k}\!\left(p_k - \tfrac{\text{F1}_k}{2}\right)$$

This derivation reveals how the effective cost of evaluating a condition for a given patient depends on confusion counts over all other patients, violating patient-level independence.
If the model has historically performed poorly on class~$k$, the effective reward for a true positive on~$k$ inflates, incentivizing the model to favor that class to rehabilitate its macro-average.

\section{Sequential Evaluation Procedure}
\label{apdx:simulator}

This appendix specifies the reduced-form operational cost model used in our experiments and clarifies its relationship to the idealized theory in the main text.
The model should be interpreted as a hypothesis-prioritization abstraction for evaluation, rather than as a literal description of clinical workflow.
Given a forecast $\vec p(x)$, the simulator converts probabilities into an ordered sequence of diagnostic actions and accumulates costs until the true condition is identified.

\paragraph{Assumptions.}
In the reduced-form simulator, classes are mutually exclusive and tests are perfectly informative: evaluating the true class terminates the search, while evaluating an incorrect class incurs its cost and eliminates one hypothesis.

\paragraph{Setup \& Policy.}
For an input $x$ with true class $i$, a model predicts $\vec p(x) \in \Delta^{K-1}$. A cost vector $\vec c \in \mathbb{R}_{>0}^K$ assigns a strictly positive testing cost to each class (see \cref{app:costs}).
The same $\vec c$ is used for all models and all test-set instances within a given experiment; only the forecast $\vec p$ varies across models.
This fixed-$\vec c$ design defines a common operational environment for comparing models, rather than asserting that all cases share identical real-world diagnostic costs.
Under these assumptions, the optimal sequential policy evaluates classes in descending order of the ratio $p_k/c_k$ (by pairwise exchange; see \cref{apdx:search_scoring}), with ties broken by class index.

Let $\sigma$ denote the resulting search sequence and $t^*$ the step at which the true class $i$ is evaluated ($\sigma_{t^*} = i$). The realized search cost is
\[
C(\vec p,\,i;\,\vec c)
\;=\;\sum_{t=1}^{t^*} c_{\sigma_t}.
\]

\paragraph{Aggregation.}
Over a test set of $N$ instances $\{(x_n,i_n)\}_{n=1}^N$, the aggregate simulated cost for model $m$ is
\[
\widehat{J}_{\mathrm{sim}}(m)
\;=\;
\frac{1}{N} \sum_{n=1}^N C\!\bigl(\vec p_m(x_n),\,i_n;\,\vec c\bigr).
\]
We rank candidate models by $\widehat{J}_{\mathrm{sim}}$ and compute Kendall's $\tau$ between this ranking and the rankings induced by standard evaluation metrics.

\paragraph{Instance-varying costs.}
The fixed-$\vec c$ assumption is an evaluation simplification; in practice, diagnostic costs may vary by patient, presentation, or clinical setting.
The bilevel framework in \cref{sec:bilevel} relaxes this simplification by taking expectations over a distribution of cost vectors, thereby assessing robustness across plausible operational environments rather than under a single shared cost specification.

\subsection{The Multiplexing Abstraction}
\label{apdx:multiplexing}

The simulator is a hypothesis-prioritization abstraction, not a literal claim that clinicians run one physical test per class.
We interpret $c_k$ as the expected incremental diagnostic expenditure induced by prioritizing condition~$k$, including any downstream workup that this prioritization decision typically triggers in the relevant clinical pathway.

In practice, many diagnostic procedures are multiplexed: one biopsy may resolve several histopathological hypotheses, and one imaging study may inform multiple candidate retinal conditions.
The sequential model abstracts such bundled procedures into hypothesis-level prioritization costs.
Its purpose is therefore not to reproduce the physical ordering of assays, but to evaluate how well a model's ranking of hypotheses aligns with the cost-sensitive ordering induced by an idealized decision process.

\subsection{Robustness to Imperfect Tests and Treatment Costs}
\label{apdx:imperfect_tests}

Imperfect tests and treatment payoffs modify the effective cost of prioritizing a condition but, under the assumptions below, preserve the same ranking form.
Accordingly, the scoring rules from \cref{apdx:search_scoring,apdx:beta_family} extend after replacing per-class costs by the effective costs defined below.

\paragraph{Imperfect tests.}
Let $s_i$ denote the sensitivity of test~$i$ and $f_i$ its false-positive rate.
A false positive triggers a confirmatory workup costing~$C_i$ before the condition is ruled out and the search resumes.
Assuming conditionally independent test outcomes given the true class, \citet{kress08} show by pairwise exchange that the optimal policy is greedy in the index
\begin{equation}\label{eq:kress_index}
  I_i\;=\;\frac{s_i\,p_i}{c_i + f_i\,C_i}\,,
\end{equation}
the probability of resolving the case per unit of expected diagnostic expenditure (test cost plus expected false-positive workup).
Defining the \emph{effective cost}
$\tilde{c}_i := (c_i + f_i\,C_i)/s_i$
yields $I_i = p_i/\tilde{c}_i$.
Under conditional independence and the reduced-form assumptions above, this reparameterization places the imperfect-test setting in the same ordering family as the perfect-test model: low sensitivity inflates cost by $1/s_i$, while false positives add the expected confirmatory-workup burden $f_i C_i/s_i$.
Under the same conditions, the decomposition and propriety results from \cref{thm:decomposition} through \cref{cor:beta_infty} apply after substituting $c_i \mapsto \tilde{c}_i$.
Strict propriety requires full support of the effective cost ratio $\tilde{c}_i / \tilde{c}_j$, which holds when $c_i$ has unbounded upper support but can fail when both $c_i$ and $f_i\,C_i$ are bounded (since confirmatory workup then imposes a positive lower bound on $\tilde{c}_i$; see \cref{thm:properness}\,(ii)).

If test outcomes are correlated (e.g., histopathological findings from related tissue sites), the greedy index remains a useful approximation, but exact optimality need not hold; analyzing that setting requires a more general sequential-testing model \citep[see][]{kress08}.

\paragraph{False negatives and search restarts.}
Suppose that when the true class~$j$ is tested, it is missed with probability $1 - s_j$, after which the search proceeds through the remaining classes and restarts.
Under the maintained assumptions that failed passes incur the same total classwise cost regardless of within-pass order, and that restarts do not introduce path-dependent updates beyond elimination, the probability of failing to resolve the case in a given pass is $1 - s_j$, independent of the search order.
The expected number of passes until resolution is therefore $1/s_j$.
Each failed pass incurs a total cost that sums over all classes and does not depend on the ordering.
The only ordering-dependent term is the cost accumulated within the successful pass before the true class is identified, which has the same ordering structure as the perfect-test problem with per-class costs $c_i + f_i\,C_i$.
The multiplicative factor $1/s_j$ does not affect which ordering minimizes expected cost and is already absorbed into $\tilde{c}_i = (c_i + f_i\,C_i)/s_i$.
This parallels the rescue-time separation in Equation~(3) of \citet{kress08}: the restart overhead depends on the true class but not on the search order, so it factors out.

\paragraph{Treatment and disease costs.}
Treatment payoffs contribute a second order-invariant term provided they depend only on the resolved true condition and not on the diagnostic path or time to resolution.

Let the true condition~$j$ carry treatment benefit~$B_j$, treatment harm~$r_j$, and untreated-disease cost~$D_j$.
Upon resolution, a true positive (probability $s_j$) yields payoff $B_j - r_j$, while a false negative (probability $1{-}s_j$) incurs $D_j$.
The realized payoff
\[
  \lambda_j \;=\; s_j\,(B_j - r_j) \;-\; (1-s_j)\,D_j
\]
is a function of the true condition alone.
Under this assumption, averaging over $j$ adds an order-invariant constant to expected cost, so the optimal ordering and associated scoring rule still depend only on $\tilde{c}_i$.

\medskip
\noindent
In summary, the perfect-test, zero-treatment-cost model in the main text captures the core ordering structure under the reduced-form assumptions used here.
Imperfect tests and treatment payoffs modify the effective per-hypothesis costs and, under the stated independence and order-invariance conditions, preserve the same ranking form.
Outside those conditions (for example, with correlated tests or path-dependent utilities), the resulting index should be interpreted as an informed approximation rather than an exact characterization.

\clearpage
\section{Diagnostic Cost Specification}
\label{app:costs}

The cost vector $\mathbf{c} = (c_1, \ldots, c_K)$ represents the expected economic intensity of the diagnostic workup triggered to confirm or rule out condition $k$. 
In the Pandora framework, the decision-maker tests hypotheses sequentially. 

\paragraph{Clinical Setting \& Base Costs.} 
To reflect marginal costs in a real-world setting, we model two plausible clinical scenarios:
(1) A primary care office visit for a dermatologic complaint 
(2) A community optometry visit for comprehensive eye exam (assuming use of routine OCT screening)
Clinical pathways were defined to adhere where possible to National Cancer Center Network guidelines.
In each of these scenarios, positive algorithmic suspicion will result in further healthcare costs such as subspecialist visit and potential procedures, a strictly positive marginal diagnostic expense ($c_k > 0$).
For the PCP visit, positive algorithmic suspicion prompts referral to a dermatologist at a new patient 15-29 minute office visit (CPT 99202, \$75.15) inclusive of total body skin exam with potential for biopsy.
For the optometrist visit, positive algorithmic suspicion prompts referral to an ophthalmologist office with comprehensive dilated eye exam (CPT~92004, \$149.64) and potential additional staining.
All base estimates utilize the CY~2026 CMS national non-facility Physician Fee Schedule (conversion factor \$33.4009) \cite{cms2026pfs}.
Where a workup involves multiple billable services, we sum the exact national payment amounts.

\paragraph{MedMNIST.}
We evaluate on two multiclass MedMNIST datasets with clinically distinct downstream actions:
DermaMNIST (7 skin lesion classes) derived from the HAM10000 dataset \cite{tschandl18} and OCTMNIST (4 retinal conditions) derived from the Kermany OCT dataset \cite{kermany18}.
We assume a triage setting for each dataset.
We prioritize hypothesis triggers to a specialist dermatologist or ophthalmologist.
We restrict attention to settings where sequential diagnostic search is meaningful (more than two labels, task is feasible, labels map to treatment decisions).
The implications of this restriction are as follows for each dataset.

\paragraph{DermaMNIST.}
The DermaMNIST dataset consists of skin lesions representing seven categories: melanoma, basal cell carcinoma, actinic keratoses/intraepithelial carcinoma (AKIEC), vascular lesions (broad group ranging from cherry angiomas to hemorrhage to angiokeratomas), melanocytic nevi, benign keratoses (a combined subgroup of seborrheic keratoses or "senile wart", solar lentigo or "sunspot", and lichen-planus like keratoses (LPLK)), and dermatofibroma.
Among these, three diagnoses present opportunities for meaningful sequential diagnostic search.

Lesions suspicious for melanoma, a potentially fatal skin cancer, require an office visit to a dermatologist (CPT 99202, \$75.15) with wide excisional biopsy whose type may vary based on size and location of the lesion.
The most common biopsies are punch biopsy (CPT~11104, \$121.25) or excisional biopsy (CPT 11106, \$151.31).
For the purposes of our model we have averaged these two codes for net cost of \$136.28.
The biopsied tissue undergoes standard histopathologic examination (CPT~88305, \$70.14), resulting in a total incremental cost of \$281.57.
Increasingly, melanoma samples are also sent for additional immunohistochemical (IHC) and staining with one review noting up to 50\% of samples tested \cite{ojukwu24};
while standard IHC stains may contain numerous we model a standard two-antibody IHC panel (CPT~88342, \$110.22; CPT~88341, \$94.19).
In our model, half of the pathways will result in this additional staining bringing the total pathway cost to \$383.77. 

Basal cell carcinoma, a common sun-exposure mediated skin cancer, typically requires a dermatologist visit (CPT 99202, \$75.15) with shave biopsy (CPT 11102, \$95.53) along with standard histopathology (CPT~88305, \$70.14) to confirm the diagnosis.
This pathway totals \$240.82. 

The benign keratoses category represents a mixed group of conditions which include senile warts, sunspots and LPLK.
As the original authors note, "From a dermatoscopic view, lichen planus-like keratoses are especially challenging because they can show morphologic features mimicking melanoma and are often biopsied or excised."
Given this, we propose that roughly 10\% of this category will follow the same pathway as melanoma with a total marginal cost of \$106.01.

The remaining four categories (vascular lesions, melanocytic nevi, dermatofibroma, and actinic keratoses/intraepithelial carcinoma) will require a nonurgent dermatologist visit for comprehensive skin exam alone.
The pathway cost is thus strictly the dermatologist CPT 99202 new patient visit fee (\$75.15).

\paragraph{OCTMNIST.}
The OCTMNIST dataset includes four categories of ocular pathologic features: drusen, choroidal neovascularization, diabetic macular edema, normal. 

Drusen are lipid-rich deposits which may represent simple aging or early age-related macular degeneration (AMD) depending on their characteristics, size and number \cite{fleckenstein24}.
If detected as part of comprehensive routine screening, this would trigger referral to ophthalmologist for comprehensive eye exam (CPT 92004, \$149.64) with color fundus photography (CPT~92250, \$37.07) in addition to likely repeat OCT for monitoring, totaling \$186.71. 

Detection of choroidal neovascularization (CNV), the pathologic growth of new blood vessels in the subretinal space commonly associated with more advanced age-related macular degeneration (AMD), would require an urgent referral to ophthalmologist for comprehensive dilated eye exam (CPT 92004), color fundoscopy (CPT 92250 \$37.07) plus fluorescein angiography (FA; CPT~92235, \$162.33) or increasingly OCT-angiography (CPT 92137, \$59.79) \cite{ly19,fleckenstein24}.
A multimodal approach is required to characterize the neovascular membrane and determine whether treatments such as anti-vessel growth therapy injections are warranted.
For the purposes of the model we will assume only 5\% of patients obtain OCT-A instead of fluorescein angiography in practice.
The marginal cost of this pathway thus totals \$343.91.

Diabetic macular edema is swelling due to a dysregulated blood-retinal barrier that can be a vision-threatening complication of diabetes.
If suspected, it requires urgent referral to an ophthalmologist for comprehensive eye exam (CPT 92004, \$149.64) with color fundus photography (CPT~92250, \$37.07) to characterize concomitant diabetic retinopathy.
Similarly to CNV, current gold standard involves use of FA to map vascular leakage and identify treatable lesions with the rise of OCT-A as a potential less invasive replacement.
Thus, this pathway also yields a marginal cost of \$343.91. 

Normal retinae require only the comprehensive clinical examination to confirm the remote screening result and definitively exclude peripheral pathology not visible on OCT.
These will likely only require annual follow-up in primary clinic for which we assign the 92004 \$149.64 exam fee.

\begin{table}[h]
  \centering
  \caption{Diagnostic evaluation cost vectors.
    Each $c_k$ represents the exact expected cost of the clinical pathway triggered to confirm or rule out hypothesis $k$ following a remote/triage screening.
    Derived from CY~2026 CMS non-facility RVUs.}
  \label{tab:cost_vector}
  \small
  \begin{tabular}{llrl}
    \toprule
    Dataset & Class & Cost (\$) & Modeled Clinical Pathway \\
    \midrule
    \multirow{7}{*}{DermaMNIST}
      & Actinic keratosis   & 75.15 & Visit only \\
      & Basal cell carc.    & 240.82 & Visit + shave bx + path \\
      & Benign keratosis    & 106.01 & Visit + 10\% melanoma pathway \\
      & Dermatofibroma      & 75.15  & Visit only \\
      & Melanoma            & 383.77 & Visit + bx + path + 50\% IHC panel \\
      & Melanocytic nevi    & 75.15  & Visit only \\
      & Vascular lesion     & 75.15  & Visit only \\
    \midrule
    \multirow{4}{*}{OCTMNIST}
      & CNV                 & 343.91 & Dilated exam + fundus photo + FA (5\% OCT-A) \\
      & DME                 & 343.91 & Dilated exam + fundus photo + FA (5\% OCT-A) \\
      & Drusen              & 186.71 & Dilated exam + fundus photo \\
      & Normal              & 149.64 & Dilated exam only \\
    \bottomrule
  \end{tabular}
\end{table}

\paragraph{FractureMNIST3D.}
\label{apdx:fracture_exclusion}

We exclude FractureMNIST3D from the main evaluation for two reinforcing reasons.

\paragraph{Unresolvable at $28^3$.}
FractureMNIST3D downsamples $64\,\text{mm}^3$ CT crops to $28\times28\times28$ voxels (${\sim}2.3\,\text{mm}$ isotropic), well below the resolution needed to distinguish cortical disruption patterns.
Accordingly, all published baselines perform poorly: the best reported AUC is $0.725$ and accuracy $50.8\%$ (chance $= 33.3\%$) \cite{medmnistv2}, and the dataset has been excluded from at least one foundation-model study on exactly these grounds \cite{schafer24}.
Model rankings derived from near-chance classifiers are dominated by noise, making any metric comparison (Pandora or otherwise) unreliable.

\paragraph{Labels do not map to treatment costs.}
The buckle/nondisplaced/displaced labels, inherited from RibFrac \cite{jin2020ribfrac}, describe single-fracture morphology.
Clinical management of rib fractures depends on patient-level factors rather than the displacement category of an isolated fracture crop:
total fracture count, flail chest, bilateral involvement, complication risk \cite{brasel2017}.
Assigning episode-of-care costs to these labels is therefore not clinically defensible.

\section{Training Details}
\label{apdx:training}

\paragraph{Model zoo.}
We evaluate pretrained image encoders from the TIMM library \cite{wightman19}.
To make the comparison deterministic and computationally manageable, we filter the pretrained TIMM models to those that
(i) accept 3-channel $224\times224$ inputs and
(ii) have at most 50M parameters.
In the \texttt{timm} version used for our experiments, this yields 601 compatible models; four of them emit non-finite features when run as frozen encoders on initialization, so we exclude them.

\paragraph{Datasets and splits.}
We use DermaMNIST and OCTMNIST from MedMNIST+ \cite{medmnistv1,medmnistv2}.
We use the $224\times224$ MedMNIST+ variants rather than the default $28\times28$ versions.
The data is downloaded automatically through the \texttt{medmnist} Python package.
Inputs are normalized with the ImageNet mean and standard deviation.
The single-channel OCTMNIST images are expanded to three channels by replication so that they are compatible with the TIMM zoo.
For both datasets, we report on the standard held-out test split.
We cap the training split at 10{,}000 images, applied via a deterministic seeded random subsample.
This leaves DermaMNIST's 7{,}007-image training split intact and reduces OCTMNIST (97{,}477 training images) to roughly 10\% of its training data, which keeps frozen-encoder feature extraction tractable across the full model zoo.

\paragraph{Linear-probe training protocol.}
For every pretrained backbone, we freeze the encoder and train only a linear classification head.
We extract features once per split by running the frozen encoder with average pooling (falling back to each architecture's default pool for the few backbones that reject average pooling) under fp16 mixed-precision inference, and cache the resulting feature tensors.
The feature dimension is measured from an actual forward pass rather than read from the model's reported \texttt{num\_features}, which is incorrect for some architectures.
The linear head is then trained on the cached features with cross-entropy loss using AdamW with learning rate $10^{-3}$, weight decay $10^{-4}$, batch size 64, and 1 epoch.
The code supports validation accuracy-based best-epoch selection, but we do not use it.
We use the same fixed training configuration for all models in the zoo rather than tuning hyperparameters separately for each backbone, since the goal of the experiment is to compare induced model rankings under a common protocol.

\paragraph{Reproducibility.}
All random seeds are fixed to 42 through a shared \texttt{set\_seed()} utility that seeds Python, NumPy, and PyTorch (including CUDA RNGs when available).
Together with the deterministic model-zoo filtering described above, this makes the experimental pipeline reproducible up to the usual nondeterminism of the underlying hardware and software stack.

\paragraph{Licenses for existing assets.}
We use DermaMNIST and OCTMNIST from the MedMNIST v2 benchmark \cite{medmnistv2}.
According to the MedMNIST project, MedMNIST is released under CC BY 4.0 except for DermaMNIST, which is released under CC BY-NC 4.0; thus, the MedMNIST subsets used here are CC BY-NC 4.0 for DermaMNIST and CC BY 4.0 for OCTMNIST.
DermaMNIST derives from HAM10000 \cite{tschandl18} (CC BY-NC-SA 4.0), and OCTMNIST derives from the Kermany OCT dataset \cite{kermany18} (CC BY 4.0).
Pretrained encoders are drawn from the PyTorch Image Models (TIMM) library \cite{wightman19}, which is released under Apache 2.0.
Because pretrained checkpoints in TIMM originate from multiple upstream sources, the applicable license for a given pretrained weight file is the license listed for that model in the TIMM repository.

\paragraph{Compute resources.}
Filtering approximately 600 pretrained models, extracting frozen features, training linear probes, and evaluating the ranking metrics takes approximately one day on a single GPU.
To reduce wall-clock time, we sharded part of the workload across two Google Colab GPU runtimes.
In our experiments, runs were performed on Google Colab GPU runtimes; both T4 and A100 were used.
The dominant cost is frozen-encoder feature extraction and pretrained-weight download rather than the optimization of the linear head.

\end{document}